%% file: arxiv.tex
\documentclass{article} %
\usepackage[final]{colm2026_conference}

\usepackage{microtype}
\usepackage{hyperref}
\usepackage{url}
\usepackage{booktabs}
\usepackage{multirow}

\usepackage{lineno}

\usepackage{xcolor}
\usepackage{tcolorbox}
\usepackage{listings}
\usepackage{wrapfig}
\usepackage{array}
\definecolor{bg_prompt}{RGB}{245, 245, 245} 
\definecolor{bg_trace}{RGB}{235, 245, 255}  
\definecolor{darkblue}{RGB}{0, 0, 139}
\newtcolorbox{promptbox}[1]{
  colback=bg_prompt,
  colframe=gray!50,
  title=\textbf{#1},
  coltitle=black,
  fonttitle=\bfseries,
  boxrule=0.5pt,
  sharp corners,
  left=2mm, right=2mm, top=2mm, bottom=2mm
}

\definecolor{boxbg}{RGB}{248,249,250}
\definecolor{boxframe}{RGB}{180,185,190}
\definecolor{commentgray}{RGB}{90,90,90}

\newcommand\CRASP{\textsc{C-RASP}}
\newcommand\CRASPpl{\CRASP[\text{periodic},\text{local}]}
\newcommand\CRASPplshort{\CRASP[\text{Pos}]}

\newcommand{\cttn}[2]{\ensuremath{\textsc{\textbf{\#}} \left[ #1 \right] \; #2}}

\newcommand{\cif}[3]{\ensuremath{#1\;\mathbf{?}\; #2\; \textbf{:} \;#3}}

\input{math_commands}
\newcommand{\TCzero}{\ensuremath{\textbf{TC}^0}}
\newcommand{\ACzero}{\ensuremath{\textbf{AC}^0}}
\newcommand{\infrasp}{\ensuremath{\textsc{C*-RASP}}}

\newcommand{\Sem}[2]{[\![#1 ]\!]_{#2}}

\usepackage[capitalize,noabbrev]{cleveref}

\usepackage{amsmath}
\usepackage{amssymb}
\usepackage{mathtools}
\usepackage{amsthm}
\theoremstyle{plain}
\newtheorem{theorem}{Theorem}[section]

\newtheorem{lemma}[theorem]{Lemma}
\newtheorem{corollary}[theorem]{Corollary}
\theoremstyle{definition}
\newtheorem{definition}[theorem]{Definition}

\theoremstyle{remark}

\definecolor{darkblue}{rgb}{0, 0, 0.5}
\hypersetup{colorlinks=true, citecolor=darkblue, linkcolor=darkblue, urlcolor=darkblue}

\title{Barriers to Universal Reasoning With Transformers\\(And How to Overcome Them)}

\author{
    Oliver Kraus\thanks{Equal contribution. $\dagger$Co senior authors. Correspondence: \{okraus, ysarrof\}@lst.uni-saarland.de. \\ Link to Code: \url{https://github.com/coli-saar/BarriersToUniversalReasoningWTransformers}} \textsuperscript{ 1}, 
    Yash Sarrof\textsuperscript{$\ast$1}, 
    Yuekun Yao\textsuperscript{2}, 
    Alexander Koller\textsuperscript{$\dagger$1}, 
    Michael Hahn\textsuperscript{$\dagger$1} \\
    \vspace{0.2cm} 
    \textsuperscript{1}Saarland University \qquad \textsuperscript{2}Ohio State University
}

\begin{document}

\ifcolmsubmission
\linenumbers
\fi

\maketitle

\begin{abstract}
Chain-of-Thought (CoT) has been shown to empirically improve Transformers' performance, and theoretically increase their expressivity to Turing completeness. 
However, whether Transformers can learn to generalize to CoT traces longer than those seen during training is understudied. We use recent theoretical frameworks for Transformer length generalization and find that -- under standard positional encodings and a finite alphabet -- Transformers with CoT cannot solve problems beyond $\TCzero$, i.e. the expressivity benefits do not hold under the stricter requirement of length-generalizable learnability. However, if we allow the vocabulary to grow with problem size, we attain a length-generalizable simulation of Turing machines where the CoT trace length is linear in the simulated runtime up to a constant. Our construction overcomes two core obstacles to reliable length generalization: repeated copying and last-occurrence retrieval. We assign each tape position a unique signpost token, and log only value changes to enable recovery of the current tape symbol through counts circumventing both barriers. Further, we empirically show that the use of such signpost tokens and value change encodings provide actionable guidance %
to improve length generalization on hard problems. 
\end{abstract}

\section{Introduction}
\label{sec:introduction}

Chain of Thought (CoT) empirically improves the performance of Transformers \citep{vaswani2017attention} on reasoning tasks \citep{wei2022chain, nye2022show, kojima2022large, goyal2024think}.
This apparent increase in reasoning capability has been explained by theoretical findings showing that any Turing Machine (TM) can be simulated
by a Transformer  with CoT  \citep{perez2019turing, bhattamishra2020computational, wei2022chain, merrill2024expressive, qiu2024ask}, whereas Transfomers without CoT are restricted to the  complexity class $\TCzero$ \citep{merrill2023parallelism}. However these results point only to the \emph{existence} of such a simulation, but not to whether such a simulation can be \emph{learned} from finite CoT traces.  Indeed, a mounting body of empirical evidence shows that the accuracy of Transformers degrades sharply with CoT lengths --
across tasks including algorithmic \citep{anil2022exploring, he2026can}, logical \citep{kazemnejad2023positional, saparov2023language, saparov2023testing, saparov2025transformers}, planning \citep{stechly2024chain}, and symbolic reasoning \citep{mirzadeh2025gsm_symbolic, ehop-2025}.
This behavior is poorly understood from a theory perspective. 

Here, we study this question using recent advances in the theory of Transformer length generalization, specifically  $\CRASP$ \citep{yang2024counting, huangFormalFrameworkUnderstanding2025} and $\infrasp$ \citep{sarrof2026planning} and make two complementary theoretical contributions. In our \textbf{first key result}, we prove that 
under standard positional encodings and a finite alphabet, Transformers even with CoT 
will likely length-generalize only on problems within $\TCzero$, even when they
are trained to perform CoT reasoning. We show this by proving that any task (input and CoT trace) expressible in $\CRASP$, is also definable in $\TCzero$, strengthening the result of \cite{jiang2026softmax} which showed that $\CRASP$ CoTs are not Turing complete. 
This is a surprising discovery, as prior work \citep{feng2023towards, merrill2024expressive,li2024chain} attributes the success of CoT to transcending the $\TCzero$ barrier. While this is true for expressivity, we find it to be not true for the stricter requirement of length generalization. 

Two barriers impede length-generalization  of Turing-complete reasoning with Transformers: 
copying arbitrary strings, and
position-based retrieval (cf.\ Figure~\ref{fig:barrier-hero-fig}), both  challenging for length generalization \citep[e.g.][]{liu2024exposing,  zhou2024algorithms, ebrahimi2024your, huangFormalFrameworkUnderstanding2025, jobanputra2025born}. We address copying limitations using
 \emph{signpost tokens}, and circumvent retrieval limitations by a novel CoT scheme
encoding the values of a tape cell \emph{before} and \emph{after} a computation step
changing it. %
This construction allows us to encode the individual CoT steps in $\infrasp$
(extension of \CRASP\ to infinite alphabets). \cite{sarrof2026planning} shows that, under a idealized learned model and with an infinite alphabet, Transformers \emph{provably} length generalize on problems describable in $\infrasp$. 

Thus, in our \textbf{second key result} we prove that Transformers with CoT \emph{can} length-generalizingly learn to simulate any TM if they (a) get to use an infinite alphabet of \emph{signpost} symbols
and (b) the computations of the TM are encoded in the CoT in the right way.

We also attempt to empirically validate our theoretical predictions.
We find that Transformers trained from scratch do not length-generalize in CoT reasoning on Parity, Boolean Evaluation, and Permutation, but a signpost symbol inventory growing with instance size and a value-change format can boost length-generalization. These insights help get better prompting performance on our tasks on pretrained LLMs of different sizes 
(Qwen2.5-7B-Instruct, Mistral-Small-24B-Instruct-2501, Llama-3.3-70B-Instruct). 
Taken together, our results may help explain why existing methods for CoT
reasoning with Transformers have empirically exhibited limitations with respect
to length generalization. Turing completeness is not guaranteed, but rather depends heavily on the specific formatting of the CoT. By taking the learnability of the CoT format into account, we can create more effective schemes that may actually realize this underlying computational power.

\begin{figure}
    \centering

\includegraphics[width=\textwidth]{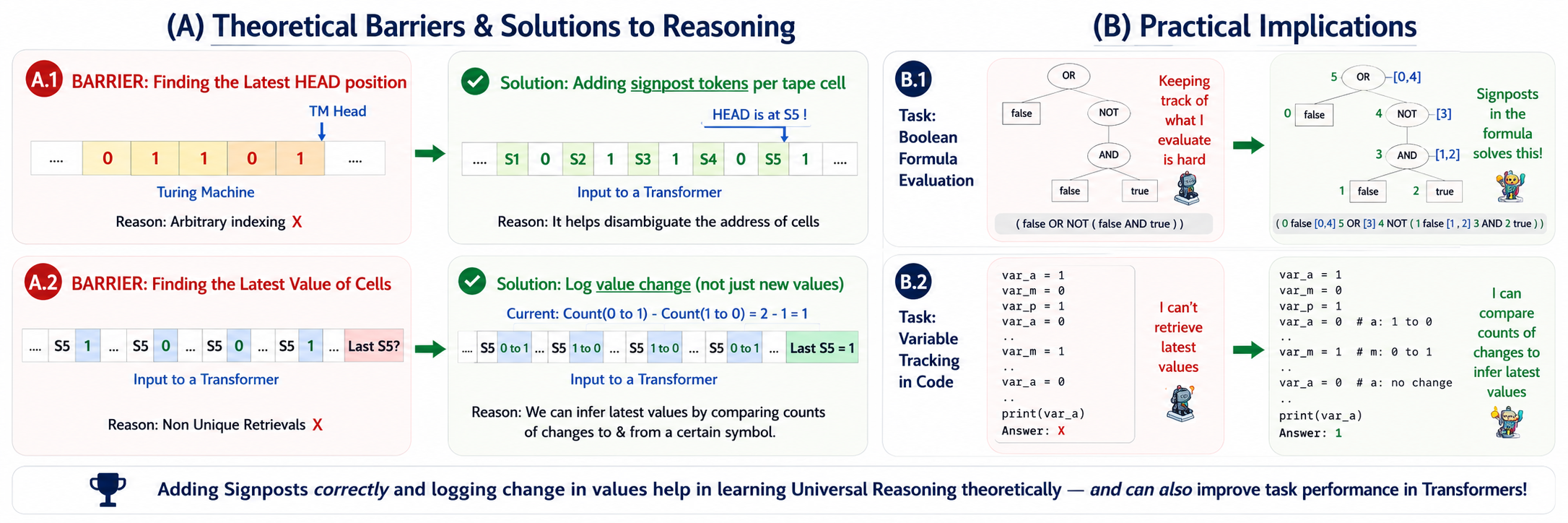}

    \caption{(A.1) 
    Simulating TM computations requires arbitrary addressing and retrieval of previously written tape positions.
    Simulating this does not length-generalize; we address it by assigning unique \emph{signpost tokens} to cells. 
    (A.2) Even with signposts, recovering the latest content of a cell from the CoT trace becomes a non-unique retrieval problem \citep{liu2024exposing}, for which also there is no length generalizable solution. Hence, we use a \emph{value change encoding} to explicitly record changes rather than just values (i.e., record the new value relative to the old value), but only when there is a change. This allows computing the current value by comparing the counts of changes between each symbol pair (Section~\ref{subsec:turingcomplete}). These theoretical ideas have implications for formatting inputs and CoTs alike. (B.1) Putting signposts tokens according to the post-order traversal order of a Boolean formula in the input can help enable a transformer solve this hard task (Section~\ref{subsec:scratch_experiments}). (B.2) Ascertaining the final value of a variable after multiple intermediate assignments when \emph{the changes of values are recorded} in comments can make LLMs output the correct answer (Section~\ref{subsec:prompting_main}).}
    \label{fig:barrier-hero-fig}
\end{figure}

\section{Background}

\subsection{Related Work}

\textbf{Empirical failures seen for CoT \& steps to improve them.} A substantial literature shows that CoT does not, by itself, guarantee length generalization. Across algorithmic \citep{anil2022exploring, he2026can}, logical \citep{kazemnejad2023positional, saparov2023language, saparov2023testing, saparov2025transformers}, planning \citep{stechly2024chain}, and symbolic reasoning tasks \citep{mirzadeh2025gsm_symbolic, ehop-2025}, prior work finds that performance often degrades sharply with increasing problem length or number of reasoning steps, even when intermediate reasoning traces are provided or elicited,  suggesting that the benefits of CoT are fragile in out-of-distribution settings.  At the same time, several works show that the \emph{form} of CoT can matter substantially.  Adding padding \citep{Jelassi2023Length}, modified scratchpad formats \citep{abbe2024how}, and more structured intermediate representations or specific positional encodings (PEs) \citep{shen2023positional, mcleish2024transformers, JelassiBKM24} have been shown to improve length generalization across tasks, and even on harder tasks such as parity \citep{kim2024transformers} which are hard to do without CoT \citep{hahn-rofin-2024-sensitive}. However, these gains are highly sensitive to the chosen representation and do not imply that arbitrary CoT formats length-generalize. Our construction with signpost tokens in Turing Machines is heavily inspired by, and generalizes, the idea of index hints \citep{nogueira2021investigating, bueno-etal-2022-induced, zhou2024algorithms, ebrahimi2024your} being used across the input and CoT to help in generalization, and provides a theoretical justification of why index hints help. 

\textbf{Prior Turing Completeness results for Transformers.} Most prior Turing-completeness constructions for Transformers with CoT largely concern \emph{expressivity} 
\citep{perez2019turing, bhattamishra2020computational, wei2022statistically, qiu2024ask, malach2024autoregressive, amiri2025lowerboundschainofthoughtreasoning, merrill2024expressive}. Expressivity implies that in principle a transformer could be hard coded to simulate a TM computation, but it does not address the question of whether such simulations can be learnt with any learning model. There also exist prior constructions that address learnability of CoT, we contrast these results with ours in Section~\ref{subsec:prior_turing}.

\textbf{Link to Circuit Complexity and existing Barriers.} The abilities of Transformers are often studied via circuit complexity \citep{vollmer1999introduction}, which formalize the difficulty of problems for highly parallelized architectures \citep{merrill2026why}. 
Prior work bounds the power of standard Transformers (no CoT) in terms of the constant-depth class \(\TCzero\) and its subclass \(\ACzero\) \citep{hahn2020theoretical, hao2022formal, merrill-etal-2022-saturated} while with CoT they can achieve Turing completeness \citep{perezJMLR, bhattamishra2020computational,merrill2024expressive}. 
$PARITY$ (is the number of $1$s even in a binary string ?) lies in \(\TCzero\). 
On the other hand,  \emph{Boolean formula evaluation} (evaluating the truth value of a propositional logic formula without variables) and  \(S_5\) permutation-product problems (tracking order of objects after a series of permutations) are not in \(\ACzero\), believed to be outside even of \(\TCzero\), and thus -- under standard conjectures -- believed to be impossible to express for Transformers without CoT \citep{merrill2023parallelism}.
It has thus been argued that CoT, by achieving Turing completeness, allows Transformers to transcend $\TCzero$ \citep[\emph{inter alia},][]{li2024chain}.

\subsection{Theoretical Frameworks for Length Generalization: {\CRASP} \& {\infrasp}} \label{subsec:bg_crasp_infrasp}

\textbf{C-RASP.}
Counting-RASP introduced by \citet{yang2024counting} has become the standard formalism to study the abilties of decoder-only softmax transformers. 
Recently \cite{huangFormalFrameworkUnderstanding2025} showed that membership in {\CRASP} and its variant with some positional primitives {\CRASPplshort}\footnote{shorthand used in \cite{jobanputra2025born} to refer to {\CRASPpl}\citep{huangFormalFrameworkUnderstanding2025}}
could be used to give length generalization guarantees for Transformers with No Positional Encodings (NoPE) and Absolute Positional Encodings (APE) respectively, under an idealized learning model. Since, {\CRASP}
has become a central tool for reasoning about length generalization in a theoretically principled manner \citep{huangFormalFrameworkUnderstanding2025, chen2025nonasymptotic, yang2025kneedeep}. Further, empirically, it has also been highly predictive: tasks expressible in {\CRASP} tend to generalize with NoPE and those expressible in {\CRASPplshort} generalize with APE transformers, while tasks shown to lie outside tend to fail when tested on longer lengths \citep{huangFormalFrameworkUnderstanding2025, jobanputra2025born}.

\textbf{{\infrasp}.} {\CRASP} assumes a fixed finite alphabet. 
{\infrasp} \citep{sarrof2026planning}, an extension of {\CRASP} uses a split alphabet \(\Sigma \cup \mathcal C\), where \(\Sigma\) is finite and \(\mathcal C\) is an unbounded alphabet. Its key new primitive is  a local \emph{match predicate}, which allowed the program to compare symbols from \(\mathcal C\) relationally without memorizing their absolute identities. We will here use $\mathcal C$ as an alphabet of signpost tokens. Like {\CRASP}, membership in {\infrasp} also implies a length generalization guarantee for NoPE and APE transformers (depending on usage of positional primitives) under an idealized learning model \citep{sarrof2026planning}.

\textbf{Learning Model used for Generalization guarantees.} \label{subsec:bg_learning_model} Both \(\CRASP\) and \(C^*\)-RASP are used in conjunction with idealized learning models that formalize the notion of length generalization as the \emph{asymptotic convergence} of a sequence of transformers -- trained on increasing lengths and/or on an increasing vocabulary -- to the same underlying
algorithm, enabling generalization. 
The guiding intuition is that a target task generalizes when it requires only local operations for success, requirements formalized as translation invariance (interactions between parameters should depend only
on relative differences between positions and symbols) and locality (beyond a threshold such interactions should vanish).
A learner gets trained on instances of bounded size and/or bounded alphabet. 
Learning is the process of selecting from a restricted hypothesis class of finite precision transformers that satisfy these constraints, a model that fits the given training data perfectly, while respecting a regularizer that favors simplicity in models. $\CRASP, \infrasp$ serve as a more human readable interface to these ``algorithms''. The main result for both states that the inference procedure picks out a length generalizable solution if the task can be represented by $\CRASP$ or $\infrasp$. The key difference \footnote{This results in many changes in the technical specification of the learning model, from that in \citet{huangFormalFrameworkUnderstanding2025}, but this does not affect our results, as we operate at the RASP abstraction level.} is that while \citet{huangFormalFrameworkUnderstanding2025} put these constraints only on positions, \citet{sarrof2026planning} extended them to input symbols.

\section{Theoretical Results}

\subsection{Formalizing Length-Generalizable Learning of CoT}
\paragraph{Formalization of CoT generation.}

In {\CRASP}, one evaluates the program at the final input token to check if the  input string is accepted. However, for  CoT, beyond acceptance of a string, we also need a way to generate intermediate tokens autoregressively with {\CRASP}. We therefore introduce a corresponding autoregressive formalization of CoT applying both to Transformers and to CoT {\CRASP} programs,  closely aligning with \cite{jiang2026softmax}.

Let $\Sigma$ be the input alphabet, and let $\Xi \supseteq \Sigma$ be a finite alphabet of symbols that may appear during chain-of-thought generation. We assume that $\Xi$ contains two distinguished symbols
$\langle SEP \rangle,\ \langle EOS \rangle \in \Xi \setminus \Sigma$
used to mark the final answer and the end of generation, respectively. At the level relevant here, both transformers and CoT C-RASP programs induce a deterministic \emph{next-token map} $M : \Xi^* \to \Xi$
which specifies, for each current prefix, which token is generated next. Informally, we start with input prefix $w$, and the model emits a token at a time, conditioning each prediction on the input and the trace generated so far.

\begin{definition}[Autoregressive CoT generation]\label{def:cot-generation}
Let $M : \Xi^* \to \Xi$ be a next-token map, and let $w \in \Sigma^*$.
We say that $M$ \emph{generates} a string
$U = U_1 \cdots U_m \in \Xi^*$
on input $w$ if for each $k=1,\dots,m$,  $M(wU_1\cdots U_{k-1}) = U_k$
\end{definition}

\textbf{Transformers with CoT.}
We model a decoder-only transformer as a length-preserving map $T : \Xi^* \to \Xi^*$
where, for any string $x = x_1 \cdots x_m \in \Xi^*$, the symbol $T(x)_m$ is interpreted as the token predicted after reading the prefix $x_1 \cdots x_m$. With a slight abuse of notation, we write $T(x) := T(x)_{|x|}$ for this next-token prediction. Thus every decoder-only transformer induces a next-token map $x \mapsto T(x)$.

\textbf{C-RASP with CoT.}
A \emph{CoT C-RASP program} over $(\Sigma,\Xi)$ consists of a standard C-RASP program together with a family of definitions for every $\sigma \in \Xi$,  $\operatorname{OUTPUT}(\sigma) \defeq \varphi_{\sigma}$
where each $\varphi_{\sigma}$ is a Boolean C-RASP expression, subject to the requirement that for every input prefix $w \in \Xi^*$, exactly one of the formulas $\varphi_{\sigma}$ is true at the last position.%
\footnote{Compared to \citet{jiang2026softmax}, we write $\operatorname{OUTPUT}(\sigma)$ instead of $O_\sigma$, and require uniqueness rather than relying on an ordering of output clauses. This simplification leaves expressiveness unchanged.}
For a Boolean C-RASP expression $\varphi$ and a string $w$, we write $\Sem{\varphi}{w}(i) \in \{0,1\}$ for the truth value of $\varphi$ at position $i$ when evaluated on $w$. This yields a next-token map $S : \Xi^* \to \Xi$
defined by $S(w)=\sigma
\text{  iff  } 
\Sem{\varphi_\sigma}{w}(|w|)=1.$

\begin{figure}[t]
    \centering
    \includegraphics[width=0.9\linewidth, keepaspectratio=false, height=2cm, trim=0 0 0 1.1cm, clip]{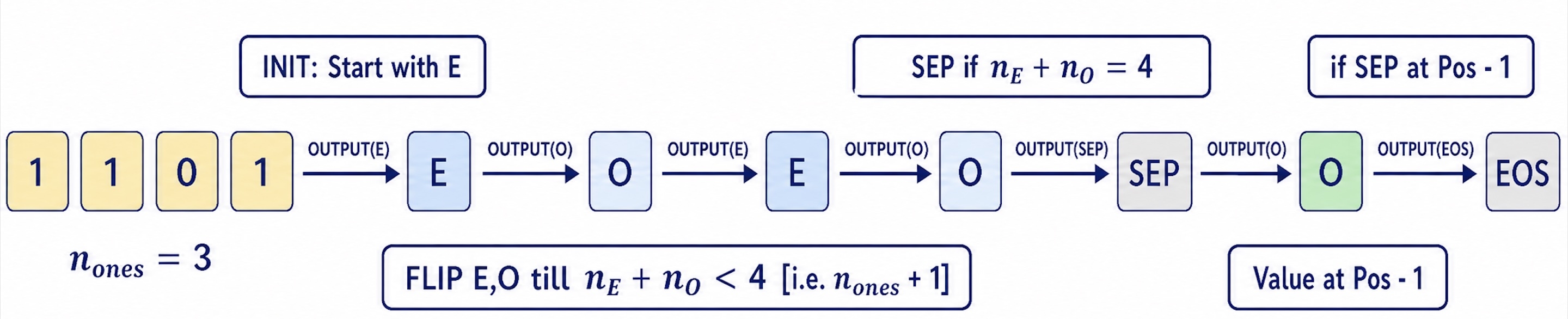}
    \caption{\textbf{Autoregressive CoT Trace for \textsc{Parity}.}
    For the input \texttt{1101}, which contains $3$ ones, the CoT C-RASP program generates the trace \texttt{E O E O}, always starting with emitting an even \texttt{E} token and flipping till the count of even and odd tokens produced crosses the number of ones (here, \(n_{\mathrm{ones}}+1=4\)). It then emits \(\langle SEP\rangle\), copies the final parity token \texttt{O} using the local {\CRASPplshort} predicate as the answer, and terminates with \(\langle EOS\rangle\). Note that only one \(\operatorname{OUTPUT}(\cdot)\) clause is active at each autoregressive step, which therefore leads to a deterministic map. The fully formal {\CRASP} program is given in Appendix~\ref{sec:crasp_cot_example}.}
    \label{fig:parity-cot}
\end{figure}

\begin{definition}[Computation of a partial function by CoT]\label{def:cot-computes}
Let  $F : \Sigma^* \to \Sigma^* \cup \{\bot\}$.
Let $M : \Xi^* \to \Xi$ be a next-token map. We say that $M$ \emph{computes} $F$ with CoT if, for every input $w \in \Sigma^*$, either $F(w) \in \Sigma^*$, and $M$ generates a string ending in $\langle SEP \rangle\, F(w)\, \langle EOS \rangle$  or  $F(w)=\bot$, and no string generated by $M$ on input $w$ contains $\langle EOS \rangle$.
\end{definition}

In case (1), the generated prefix before $\langle SEP \rangle$ is the chain of thought, while the substring between $\langle SEP \rangle$ and $\langle EOS \rangle$ is the final answer. In case (2), the absence of $\langle EOS \rangle$ represents non-termination. Note that the use of partial functions is important for modeling Turing machines, which may not terminate on a given input. Here, the second condition corresponds to non-termination of the computation, equivalent to the function $F$ being undefined.

\textbf{Length-generalizable learning of CoT.} We use a function \(f_{\mathrm{TestLength}}\) to specify the train-to-test scaling regime: after training on prefixes of length at most \(n\), the learned model is evaluated on prefixes of length up to \(f_{\mathrm{TestLength}}(n)\). 
We now adapt the idealized learning model of \citet{huangFormalFrameworkUnderstanding2025, jiang2026softmax} from language recognition to autoregressive CoT generation for functions. Informally, once the learner has seen all bounded CoT prefixes up to length \(n\), it should eventually recover the same underlying autoregressive computation and therefore continue it correctly out to the larger test scale \(f_{\mathrm{TestLength}}(n)\). 
Following \cite{huangFormalFrameworkUnderstanding2025}, the learning model assumes transformers with NoPE or APE.
\begin{definition}[Length-generalizable learning of CoT over a finite alphabet]
\label{def:cot-learnable-finite}
Let \(F : \Sigma^* \to \Sigma^* \cup \{\bot\}\) be a partial function, and let $f_{\mathrm{TestLength}}:\mathbb N\to\mathbb N$.
We say that \(F\) is \emph{length-generalizably learnable with CoT} if there exists a \(M:\Xi^*\to\Xi\) computing \(F\) with CoT such that the following holds. For each \(n=1,2,3,\dots\), use the idealized learning procedure of \citet{huangFormalFrameworkUnderstanding2025} to choose a Transformer \(T_n\) that for every prefix $wU_1\cdots U_{j-1}$ of a CoT generated by $M$ on input $w \in \Sigma^*$ with $|wU_1\cdots U_{j-1}| \le n$, predicts the correct next token $U_j$. Then there exists \(N_0\) such that for all \(n>N_0\), the next-token map induced by \(T_n\) agrees with \(M\) on every such prefix of length at most \(f_{\mathrm{TestLength}}(n)\).
\end{definition}

\subsection{Limited length generalization under a finite alphabet} \label{subsec:tc0result}

Prior work \citep{feng2023towards,merrill2024expressive, li2024chain} suggests that CoT helps because it allows Transformers to transcend $\TCzero$. 
Our first result shows that this benefit is not expected to hold under the requirement of guaranteed length generalization.
We state the theorem in the special case of deciding membership in a language:

\begin{theorem}\label{thm:finite-alphabet-barrier}
Let $\mathcal{L} \subset \Sigma^*$, and let $F_\mathcal{L} : \Sigma^* \rightarrow \{1, 0\}$ be its indicator function (i.e., $F_\mathcal{L}(w) = 1 \Leftrightarrow w \in \mathcal{L}$). If $F_\mathcal{L}$ has a CoT in $\CRASPplshort$, then $\mathcal{L}$ is definable in $\TCzero$.

\end{theorem}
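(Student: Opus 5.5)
The plan is to show that, on any input $w$, the entire CoT generated by a $\CRASPplshort$ program $S$ can be summarized by a \emph{constant} number of ``phases''. Each phase is determined by arithmetic on counts that a $\TCzero$ circuit family can carry out. The first step fixes the state space. Let $\psi_1,\dots,\psi_k$ be the Boolean subformulas of $S$ (including all $\operatorname{OUTPUT}(\sigma)$ clauses) that appear under a counting operator $\cttn{\cdot}{}$. Because CoT generation only appends tokens, the truth values of every subformula at already existing positions never change. Hence, after prefix $x$, the relevant state is the count vector $c(x)=(\#_{\le |x|}\psi_1,\dots,\#_{\le|x|}\psi_k)\in\mathbb N^k$, together with a bounded window of the last few tokens and the current position modulo the periods used by the $\text{periodic}$/$\text{local}$ primitives. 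This extra information ranges over a finite set $Q$ that depends only on $S$. The next token, and the truth values of all $\psi_i$ at the new position, are determined by $(q,c)$ through finitely many linear threshold tests $a_j\cdot c + b_j\cdot |x| \ge d_j$. Since $|x|$ is itself a count, I would absorb it into $c$. So CoT generation is a deterministic system on $Q\times\mathbb N^k$ whose increments are piecewise constant on a polyhedral partition given by the finitely many halfspaces $H_1,\dots,H_m$.

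The second step decomposes the trajectory into phases. Call a maximal run of steps during which the sign vector $(\mathbf 1[c\in H_j])_j$ stays constant a \emph{phase}. Within a phase, the evolution of $q$ is a deterministic finite-state map, so after at most $|Q|$ steps it becomes periodic with some period $p\le|Q|$. From then on, $c$ grows by a fixed drift vector $\delta$ per period. Each $a_j\cdot c$ is therefore an arithmetic progression in the number of elapsed periods. The end of the phase is the least $t$ at which some progression crosses its threshold. This $t$ is a minimum over $m$ quotients of the form $\lceil (d_j-a_j\cdot c_0)/(a_j\cdot\delta)\rceil$, possibly shifted by an offset inside the period. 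The phase may also never end, which happens when every relevant $a_j\cdot\delta$ has the wrong sign; I would deal with this in the last step. Given the input counts (computable in $\TCzero$ by iterated addition), each phase exit time, the new count vector and the new $q$ are therefore computable with $\TCzero$ division and iterated addition on $O(\log n)$-bit or polynomial-bit numbers. The counts can only grow polynomially per phase, so all numbers stay polynomially bounded in bit length.

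The main obstacle is the third step: showing that the number of phases is bounded by a constant independent of $|w|$. This is exactly what fails for general counter systems, and it is where the structure of $\CRASP$ must be used, beyond the result of \citet{jiang2026softmax} that CoT $\CRASP$ is not Turing complete. My first attempt would be a potential argument. Every count is monotone non-decreasing, and each comparison is between linear combinations of monotone counts. I would order the subformulas by nesting depth and argue by induction on depth that each $\psi_i$, viewed as a function of time along the CoT, is eventually periodic after a bounded number of regime changes. At depth $0$ the $\psi_i$ are token tests. At the next depth, a comparison of two counts whose increments are eventually periodic with rational average drifts can change its sign only finitely often, at most a bounded number of times, before the drifts fix its sign forever. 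Propagating this through the finitely many nesting levels would give a constant bound $P(S)$ on the number of phases. If this induction breaks, I would fall back on showing that the number of phases is at most polynomial with each phase step computable uniformly. That fallback only places the problem in $\textbf{P}$, so it would not suffice, and the constant bound is really needed.

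Finally, with at most $P(S)$ phases, the circuit for $|w|=n$ unrolls the phase computation $P(S)$ times in constant depth. Within the phase in which $\langle SEP\rangle$ is emitted, it reads off the answer token, which lies within a bounded window by periodicity. Since $F_{\mathcal L}$ is total, the case of a phase that never ends cannot occur before $\langle EOS\rangle$. The circuit accepts iff the emitted answer is $1$, so $\mathcal L$ is definable in $\TCzero$.
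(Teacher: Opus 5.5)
\textbf{Gap: step three is false.} You flag step three as the main obstacle, but it fails outright. The number of phases is \emph{not} bounded by a constant, and no induction on nesting depth can show it. The autoregressive loop feeds generated tokens back into the counts, so even a depth-one program has recurrent dynamics.

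Concretely, a $\CRASP$ CoT can run a multi-counter (Minsky) machine on the input counts:
\begin{itemize}
\item keep counter $r$ as a difference $C_{\mathrm{inc}_r}(i)-C_{\mathrm{dec}_r}(i)$ of two monotone counts, adding $C_1(i)=\cttn{j\le i}{Q_1(j)}$ for the counter initialised with the number $n=\#_1(w)$ of ones in $w$;
\item implement the zero test as the comparison $C_{\mathrm{inc}_r}(i)\le C_{\mathrm{dec}_r}(i)$, with $C_1(i)$ added on the left for the input counter;
\item carry the control state in the last emitted token, read by $Q_\sigma(i)$ inside the $\operatorname{OUTPUT}$ clauses.
\end{itemize}

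Deciding whether $n$ is a power of two by repeated halving then passes through $\Theta(\log n)$ regime changes, each a separate phase in your sense. With a few more counters, the CoT can run an arbitrary Turing machine on the binary expansion of $n$. So neither the CoT length nor the bit length of the counts is polynomially bounded, and your polynomial fallback fails too. Your sub-claim that a comparison changes sign only boundedly often also fails, already for zero drift: emit $a$ iff $C_a\le C_b$ and $b$ otherwise.

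In fact, take a decidable $A\subseteq\mathbb N$ outside $\mathbf{EXP}$. Then $\{w:\#_1(w)\in A\}$ has such a CoT but is not even in $\mathbf P$. So the theorem can only hold for \emph{non-uniform} $\TCzero$, and any explicit simulation of the CoT inside a circuit, as your phase unrolling does, must fail. The limitation of $\CRASP$ CoTs is not in their dynamics, which are as strong as counter machines. It is in the information bottleneck on the input.

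\textbf{The missing idea: no simulation is needed.} Your first step already contains the key fact. Once the input has been read, the whole CoT, and hence the answer, is a deterministic function of the state at position $|w|$. That state is given by the values of the finitely many count terms at the last $l$ input positions; this is the Alice--Bob summary from Theorem 12 of \citet{huangFormalFrameworkUnderstanding2025}.

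This summary has $O(\log N)$ bits, and each of its counts is computable in $\TCzero$. For each input length there are only polynomially many possible summaries. So the circuit can hard-code a lookup table from summary to final answer, without ever computing the CoT.

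This is exactly the paper's proof, and it is why the result holds irrespective of how long or complicated the CoT is. Your steps two and three become unnecessary.
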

For this, we prove that the output of any CoT trace expressible in $\CRASPplshort$ is also definable in $\TCzero$. Thus, unlike in the case of expressivity, we do not go beyond $\TCzero$ under the stronger requirement of guaranteed length generalization over a fixed finite alphabet. The theorem is stated for decision problems; an analogous statement holds for function problems with polynomially bounded output length. The proof is given in Appendix~\ref{app:theorem-1}. 

\begin{corollary}
    When $\mathcal{L}$ is not in $\TCzero$, then, for some $f_{TestLength} : \mathbb{N} \rightarrow \mathbb{N}$, under CoT learning notion of Definition~\ref{def:cot-learnable-finite} based on the learning model of \citet{huangFormalFrameworkUnderstanding2025}, length generalization from training length $n$ to test length $f_{TestLength}(n)$ will fail for infinitely many $n$.
\end{corollary}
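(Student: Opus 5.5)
The plan is to argue by contraposition. I combine Theorem~\ref{thm:finite-alphabet-barrier} with the structural facts behind the learning model of \citet{huangFormalFrameworkUnderstanding2025}. Assume $\mathcal{L}\notin\TCzero$. Suppose, for contradiction, that length-generalizable learning in the sense of Definition~\ref{def:cot-learnable-finite} succeeds for all sufficiently large $n$ under the choice $f_{\mathrm{TestLength}}(n)=2n$; any unbounded $f_{\mathrm{TestLength}}$ with $f_{\mathrm{TestLength}}(n)>n$ should work. Then there is a next-token map $M$ computing $F_\mathcal{L}$ with CoT, and an $N_0$, such that for every $n>N_0$ the learned $T_n$ agrees with $M$ on every generated CoT prefix of length at most $2n$. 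The goal is to build from the $T_n$ a CoT $\CRASPplshort$ program for $F_\mathcal{L}$. Theorem~\ref{thm:finite-alphabet-barrier} then puts $\mathcal{L}$ in $\TCzero$, which is the contradiction.

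\textbf{Step 1 (bounded complexity).} I use the regularizer of the idealized learner. Since success holds for all $n>N_0$, I expect to show, as in the proof of the negative direction in \citet{huangFormalFrameworkUnderstanding2025}, that the regularizer values of the selected $T_n$ stay bounded. The argument would be that if they grew unboundedly, the learner would keep switching hypotheses and disagree with the fixed $M$ on longer prefixes, contradicting success at the next scale $2n$. Bounded regularizer, finite precision, translation invariance and locality together leave only finitely many candidate limit transformers up to functional equivalence.

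\textbf{Step 2 (one limit object).} By pigeonhole, a single limit transformer $T^\ast$ is selected for infinitely many $n$. Each such $n$ forces agreement with $M$ on all generated prefixes of length at most $2n$, and these lengths are unbounded. Hence $T^\ast$ agrees with $M$ on every prefix of every CoT generated by $M$, and so $T^\ast$ itself computes $F_\mathcal{L}$ with CoT.

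\textbf{Step 3 (translation to a program).} I invoke the result of \citet{huangFormalFrameworkUnderstanding2025} that limit transformers (with APE, i.e.\ periodic and local positional terms) are translatable into $\CRASPplshort$. This applies to each next-token logit comparison. For every $\sigma\in\Xi$, I set $\varphi_\sigma$ to the Boolean expression stating that $\sigma$ is the (tie-broken) argmax output at the last position. These clauses are mutually exclusive and exhaustive, so $\operatorname{OUTPUT}(\sigma)\defeq\varphi_\sigma$ is a valid CoT $\CRASPplshort$ program. Its next-token map equals that of $T^\ast$ on all inputs, hence it generates the same traces and computes $F_\mathcal{L}$. Theorem~\ref{thm:finite-alphabet-barrier} then yields $\mathcal{L}\in\TCzero$, the desired contradiction. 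Therefore failure occurs for infinitely many $n$.

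The main obstacle is Step 1. One must check that the compactness and bounded-regularizer argument of \citet{huangFormalFrameworkUnderstanding2025}, stated there for language recognition, carries over to the autoregressive setting of Definition~\ref{def:cot-learnable-finite}. There the training data are CoT prefixes and correctness is per-token. I would first try to reduce each next-token decision $\sigma$ to a separate recognition problem on $\Xi^*$, namely the set of generated prefixes followed by $\sigma$. The recognition-based argument would then apply clause by clause, with care that the finitely many clauses share one bounded hypothesis.
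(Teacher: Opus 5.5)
\textbf{Verdict: genuine gap in Step 3.} Steps 1--2 sketch the direction of the main result of \citet{huangFormalFrameworkUnderstanding2025} in which successful length generalization in the idealized model forces expressibility by a Limit Transformer. The paper simply cites this, and your sketch is in the right spirit. The failure is in Step 3. No known result translates Limit Transformers into $\CRASPplshort$. Huang et al.\ prove only the other direction: $\CRASPplshort$ programs compile into Limit Transformers. The paper's own proof states explicitly that ``a converse is not known.'' So from $T^\ast$ you do not get a CoT $\CRASPplshort$ program, and Theorem~\ref{thm:finite-alphabet-barrier} cannot be invoked. Your argmax-with-tie-breaking clauses would need to express comparisons of real-valued softmax-attention logits as C-RASP Boolean expressions, and that is exactly the missing translation.

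\textbf{The fix.} Prove the $\TCzero$ bound directly for Limit Transformers instead of routing through C-RASP; this is what the paper does.
\begin{enumerate}
\item The communication-complexity argument in the proof of Theorem~12 of Huang et al.\ already shows the key fact for Limit Transformers. Alice, who holds the input, can compress everything Bob needs to carry out the whole CoT into $O(\log N)$ bits.
\item Limit Transformers can be simulated by log-precision transformers. The feedforward layers use only ReLU and Heaviside activations, the positional functions $\phi(\cdot,\cdot)$ in the attention logits are unproblematic, and the rounding built into the attention logits makes attention scores log-precision computable.
\item By \citet{merrill2023parallelism}, Alice's message is therefore computable in $\TCzero$.
\item There are only polynomially many possible messages, so a circuit can hard-code a lookup table from message to final label, exactly as in the proof of Theorem~\ref{thm:finite-alphabet-barrier}.
\end{enumerate}
With this direct argument replacing Step 3, your contrapositive structure goes through.
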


While limit transformers and the learning model of \citet{huangFormalFrameworkUnderstanding2025} are defined for NoPE/APE transformers, \citet{yang2025kneedeep} show that RoPE/AliBi transformers can be expressed in $\CRASPplshort$, showing that an equivalent length generalization failure is also expected for those types of PEs.

\subsection{Turing completeness under an infinite alphabet} \label{subsec:turingcomplete}

\paragraph{\texorpdfstring{$\infrasp$}{C*-RASP} with CoT.}
We now extend the CoT formalism to the infinite-alphabet setting. The key difference from finite-alphabet CoT is that the generated trace may need to contain symbols from the unbounded alphabet $\mathcal C$, not only symbols from a fixed finite vocabulary.

\begin{figure}
    \centering

    \includegraphics[width=\textwidth, keepaspectratio=false, height=4.3cm]{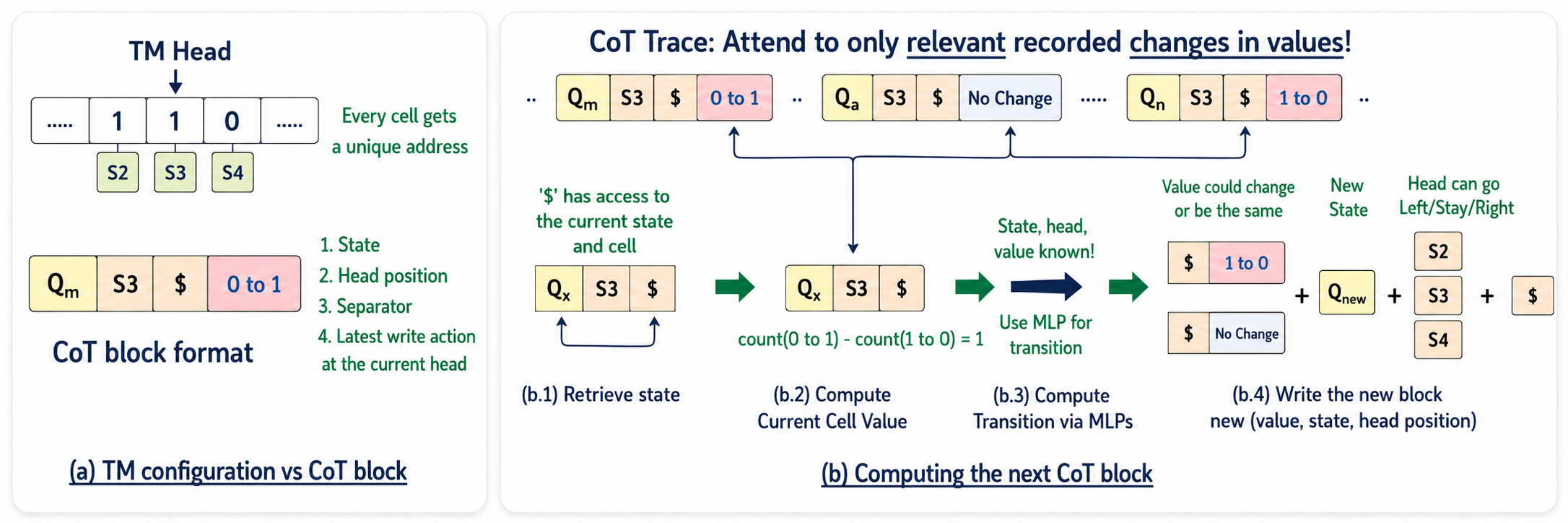}
    \caption{A TM simulation step for a single tape in our construction (same logic extends to a multi tape setup). (a) Each cell is assigned a signpost token indicating its address. We always output the CoT in blocks containing the state, cell where the head currently is, a separator, and then the change in value at the current cell after transitioning. (b) In each block, the separator has access to the current state and the current cell, as they are at fixed offsets from it. Furthermore, the separator can compute the current value on the current cell by attending to all logs of value changes done on the cell in the past CoT trace, and comparing the counts of how many times it was turned to specific values. Now, given the current state, head position, and the value under the head position, we can use an MLP to compute the next transition. We first log the write action (no change in value or a change in value). Then, we output the new state, the new head position based on whether the head moved left right or stayed, and a separator again. Full construction in Appendix~\ref{thm:positive_restate}.}
    \label{fig:tm_visualize}
\end{figure}

Let $\Sigma$ be a finite alphabet, and $\Xi \supseteq \Sigma \cup \{\langle SEP \rangle,\langle EOS \rangle\}$ be the same CoT finite alphabet of symbols used before. In addition we have $\mathcal C \cong \mathbb{N}$ which acts as an unbounded alphabet\footnote{We want $\mathcal C$ to have fixed ordering, choosing it to be isomorphic to $\mathbb{N}$ is simply a convenient choice.} that both the CoT and the input have access to. We assume that any input word presented as input $w = w_1\cdots w_n \in \Sigma^*$, is first annotated with signpost tokens from $\mathcal{C}$. 
$\mathsf{Annotate}(w) := w_1 c_1\, w_2 c_2 \cdots w_n c_n$
where $c_1,c_2,\dots \in \mathcal C$ are the first $n$ elements of $\mathcal C$. Intuitively, $\mathsf{Annotate}(w)$ interleaves each input token with its unique signpost token. These signposts will help us overcome the difficulty of arbitrary copying identified in prior work \citep{JelassiBKM24,zhou2024algorithms,huangFormalFrameworkUnderstanding2025} by allowing arbitrary indexing. 

Similar to its counterpart, a CoT $\infrasp$ program over $(\Sigma,\mathcal C,\Xi)$ consists of a standard $\infrasp$ program together with two output clauses. The first is $\operatorname{OUTPUT}(\sigma) \defeq \varphi_\sigma$ for $\sigma \in \Xi$. The second is $\operatorname{OUTPUT\_SIGNPOST}(k, d) \defeq \varsigma_{(k, d)}$ where  $k > 0$ is a single finite constant representing a fixed offset and a small constant $d = \{-1, 0, 1\}$, which represents a move direction (left, stay, right). Here $\varphi_\sigma, \varsigma_{(k, d)}$ are Boolean $\infrasp$ expressions and the outputs correspond to outputting a finite symbol $\sigma$ in the first case, and outputting a specific signpost token in the second case. Similar to the formalization of CoT with {\CRASP} we will require that for every prefix, exactly one output clause is active at the last position. Thus for $\mathsf{OUTPUT\_SIGNPOST_{(k,d)}}$, our constants $k$ will impose a local relation through $\varsigma$ so that a specific occurrence of a signpost is singled out, and then we will use our direction $d$ value to get either the same signpost, or the signpost just smaller than it, or the one just bigger that it \footnote{This is where the ordering of $\mathcal{C}$ matters.}. This induces a deterministic next-token map, with the codomain $\Xi \cup \mathcal C$.

\begin{definition}[Computation by CoT $\infrasp$]\label{def:infrasp-cot-computes}
Let $F : \Sigma^* \to \Sigma^* \cup \{\bot\}$. We say that a CoT $\infrasp$ program $S$ \emph{computes} $F$ if, for every $w \in \Sigma^*$, if either
$F(w) \in \Sigma^*$, and $S$ generates a string ending in $\langle SEP \rangle\ \mathsf{Annotate}(F(w))\, \langle EOS \rangle$ on input $\mathsf{Annotate}(w)$ holds or $F(w)=\bot$, and $S$ does not generate any string containing $\langle EOS \rangle$ on input $\mathsf{Annotate}(w)$ holds.
\end{definition}

\begin{definition}[Length-generalizable learning of CoT over an infinite alphabet]
\label{def:cot-learnable-infinite}
Let \(F : \Sigma^* \to \Sigma^* \cup \{\bot\}\) be a partial function, and let
$f_{\mathrm{TestLength}}:\mathbb N\to\mathbb N$.
We say that \(F\) is \emph{length-generalizably learnable with CoT} over \((\Sigma,\mathcal C,\Xi)\) if there exists a $M:(\Xi\cup\mathcal C)^*\to(\Xi\cup\mathcal C)$
computing \(F\) with CoT such that the following holds. 
For each \(n\in\mathbb N\), use the idealized learning procedure of \cite{sarrof2026planning} to choose a Transformer \(T_n\) that satisfies $T_n(\mathsf{Annotate}(w)U_1\cdots U_{j-1}) = U_j$ for every input $w \in \Sigma^*$ and every generated string $U_1 \cdots U_m$ of $M$ on $\mathsf{Annotate}(w)$, whenever $|\mathsf{Annotate}(w)U_1\cdots U_{j-1}| \le n$. Then there exists $N_0$ such that for all $n \ge N_0$, the next token map induced by $T_n$ agrees with $M$ on every such prefix of length at most $f_{\mathrm{TestLength}}(n)$.
\end{definition}

It should be noted that training instances can only ever have a finite subset of signpost tokens, but we learn to generalize on many more symbols by training them with offsets, as suggested in \cite{sarrof2026planning}. The idea is that in each annotation, instead of $c_1$, we choose a random offset in the range $(1, f_{\mathrm{TestLength}} -|w|)$ and annotate starting $c_{\textit{offset}}, \dots c_{\textit{offset} + |w|}$. This idea is analogous to how longer APE encodings are trained. We now show that CoTs can in fact universally length-generalize when allowed access to such an infinite alphabet:

\begin{theorem} \label{thm:positive}
\textit{($C^*$-RASP CoT is Turing-complete)}
Let $F$ be a semi-computable function computed by a Turing machine $\mathcal{T}$.
Then there is a CoT $C^*$-RASP program $S$ computing it.

\textit{($C^*$-RASP CoTs have length bounded by TM time complexity)}
Furthermore, there is a universal constant $C$ such that, when $\mathcal{T}$ terminates on $w$ in $N$ steps, then $S$ generates a string of length $C \cdot N_{tapes} \cdot (N + |w|)$ ending in $\langle EOS \rangle$ on $Annotate(w)$, where $N_{tapes}$ is the number of tapes of $\mathcal{T}$.
\end{theorem}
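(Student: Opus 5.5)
We will build $S$ explicitly, following the block format of Figure~\ref{fig:tm_visualize}. We first treat a single-tape machine $\mathcal{T}=(Q,\Gamma,\delta,q_0,q_{acc})$ and then say how to handle several tapes.

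\textbf{Trace format.} The trace is a sequence of blocks. Each block starts with a state token $q\in Q\subseteq\Xi$, followed by the signpost $c_h\in\mathcal C$ of the cell under the head, then a separator token. The separator is followed by a value-change record. This record is either a token ``no change'' or a pair token $(a\to b)\in\Gamma\times\Gamma\subseteq\Xi$ with $a\neq b$, logged immediately after the signpost it refers to.

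\textbf{Initialization.} Before simulating $\mathcal T$, $S$ replays the annotated input $\mathsf{Annotate}(w)$ as value-change records $(\sqcup\to w_i)$ attached to $c_i$. It does so by copying signposts with $\operatorname{OUTPUT\_SIGNPOST}(k,+1)$ from the previous signpost, which costs $O(|w|)$ tokens. After this phase, every cell's content is a function of its logged changes. Cells never written are blank, which is consistent with their having zero logged changes.

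\textbf{Reading the current symbol by counting.} At a separator, the state and head signpost sit at fixed offsets $1$ and $2$ back, so they are available through local positional predicates. For each pair $(a,b)$, define the count
\[
N_{a,b}=\#[\text{$(a\to b)$ logged at a signpost matching the current head}].
\]
This is a count over past positions whose preceding signpost satisfies the $\infrasp$ match predicate with the head signpost. The key invariant is that the changes of a cell form a walk in $\Gamma$ that starts at $\sqcup$. The net flow $\sum_b N_{b,a}-\sum_b N_{a,b}$ is therefore $1$ exactly at the current value $a$ when $a\neq\sqcup$, and it is $0$ for every $a$ when the cell holds $\sqcup$. These finitely many linear comparisons of counts are expressible in $\infrasp$, and this is precisely why only genuine changes are logged.

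\textbf{Transition and output clauses.} A finite Boolean combination over $(q,a)$ then determines $\delta(q,a)=(q',b,d)$. The clauses are as follows:
\begin{itemize}
\item After the separator, $S$ outputs the change record (``no change'' or $(a\to b)$) using $\operatorname{OUTPUT}$.
\item It then outputs $q'$ using $\operatorname{OUTPUT}$.
\item It then outputs the next signpost via $\operatorname{OUTPUT\_SIGNPOST}(k,d)$, where $k$ is the fixed offset back to the previous head signpost.
\item Finally it outputs the separator.
\end{itemize}
Each clause is gated on the token type found at fixed small offsets, so exactly one clause is active at each step. Moving left from the leftmost cell is prevented by the usual one-sided tape convention, or by annotating with an offset.

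\textbf{Termination.} When $q'=q_{acc}$, $S$ emits $\langle SEP\rangle$ and writes out the output. It walks the tape with $\operatorname{OUTPUT\_SIGNPOST}(k,+1)$, recovering each cell's symbol by the same counting trick, and ends with $\langle EOS\rangle$. If $\mathcal T$ diverges, no $\langle EOS\rangle$ clause ever fires. Correctness is proved by induction on the number of blocks, using the invariant that the tape contents are determined by the logged changes and the current block reflects the current configuration.

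\textbf{Length bound.} Each simulated step costs a constant number of tokens. Initialization and output cost $O(|w|+N)$, since output length is at most the space used, which is at most $|w|+N$. This gives $C(N+|w|)$ tokens for one tape.

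\textbf{Multiple tapes.} With $N_{tapes}$ tapes, each block contains one (head-signpost, change-record) group per tape, and $\delta$ reads all $N_{tapes}$ current symbols. This multiplies the per-step cost by $N_{tapes}$ and gives the stated bound.

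\textbf{Main obstacle.} The hard step is expressing ``current symbol at the head'' in $\infrasp$. The match predicate is only local, so we must verify that counting past records whose preceding signpost matches the head signpost is a legal $\infrasp$ construct. We would write it as a counting operation conditioned on a local match between the signpost at offset $-1$ of the counted position and the head signpost at the query. If that is not directly allowed, we would instead place a copy of the current head signpost adjacent to each record so that the match is strictly local.
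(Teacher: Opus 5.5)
Your proposal is correct and is essentially the paper's construction. It uses the same $q\,c\,\$\,e$ block format and the same balance test for the current symbol. It also uses the same match predicate: the paper takes exactly $\chi(i,j)=(c_{j-2}=c_{i-1})$, which \infrasp\ permits because it allows offsets on both sides, so your ``main obstacle'' and its fallback are not needed. The one deviation is your replay of the input as $(\sqcup\to w_i)$ records. The paper instead reads a cell's initial symbol directly from $\mathsf{Annotate}(w)$ via a second match $c_j=c_{i-1}$ and the symbol at offset $-1$. Your replay needs that same lookup to emit $(\sqcup\to w_i)$, so it is harmless (it costs only $O(|w|)$ tokens) but not a simplification.
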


The technical key is a \emph{value-change} encoding of the TM run: the CoT records only those events in which a tape cell changes its value. This makes it possible to recover the current symbol at a given indexed position using the matching primitives of $\infrasp$, while keeping the trace length linear in the running time of the TM (cf.\ Figure~\ref{fig:tm_visualize}). Full proof in Appendix~\ref{thm:positive_restate}

\begin{corollary}[Length Generalization for CoT]
    \label{cor:cot-positive}
{Let $f_{TestLength} : \mathbb{N} \rightarrow \mathbb{N}$ be any function diverging to infinity, and let $F$ be a semi-computable function. Then \(F\) is length-generalizably learnable with CoT over \((\Sigma,\mathcal C,\Xi)\) in the sense of Definition~\ref{def:cot-learnable-infinite}.
}
\end{corollary}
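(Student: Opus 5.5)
The corollary follows by combining Theorem~\ref{thm:positive} with the length-generalization guarantee for $\infrasp$ of \citet{sarrof2026planning}. The plan has three steps.

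\textbf{Step 1: fix the target next-token map.} Given a semi-computable $F$, we take a Turing machine $\mathcal{T}$ computing it. Theorem~\ref{thm:positive} then supplies a CoT $\infrasp$ program $S$ computing $F$ in the sense of Definition~\ref{def:infrasp-cot-computes}. We let $M:(\Xi\cup\mathcal C)^*\to(\Xi\cup\mathcal C)$ be the deterministic next-token map induced by $S$. This is well defined because exactly one output clause, either an $\operatorname{OUTPUT}(\sigma)$ or an $\operatorname{OUTPUT\_SIGNPOST}(k,d)$, is active at the last position of every prefix.

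\textbf{Step 2: reduce to finitely many $\infrasp$ predicates.} Each output clause $\varphi_\sigma$ and $\varsigma_{(k,d)}$ is a Boolean $\infrasp$ expression, and there are only finitely many of them, since $\Xi$ is finite and $k,d$ range over finite sets. Predicting the next token on a prefix therefore amounts to evaluating this finite tuple of $\infrasp$ formulas at the last position. The signpost case adds one further step: we must select the signpost that the local predicate singles out at offset $k$, and then shift it by $d\in\{-1,0,1\}$.

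We then invoke the main theorem of \citet{sarrof2026planning}. Under their idealized learning procedure, any $\infrasp$-expressible function is learned length-generalizably, with training offsets on signposts. Concretely, for every diverging $f_{\mathrm{TestLength}}$ there is an $N_0$ such that, for $n\ge N_0$, every Transformer $T_n$ fitting all training prefixes of length $\le n$ agrees with the target on prefixes of length $\le f_{\mathrm{TestLength}}(n)$. The training prefixes here are exactly those of the form $\mathsf{Annotate}(w)U_1\cdots U_{j-1}$ generated by $M$, and they fit into that framework. Applying the guarantee to the finitely many output predicates at once yields agreement of the induced next-token map with $M$, which is the requirement of Definition~\ref{def:cot-learnable-infinite}.

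\textbf{Main obstacle: the signpost outputs.} The hard part is ensuring that the guarantee of \citet{sarrof2026planning} covers emitting a symbol from the unbounded alphabet $\mathcal C$, and not only Boolean acceptance. Two facts need checking.
\begin{itemize}
\item Copying a signpost singled out by a local match predicate is a primitive their learning model handles translation-invariantly in the symbol dimension.
\item The successor and predecessor shift on $\mathcal C$, which is relative and so invariant under alphabet offsets, is also realizable within the hypothesis class.
\end{itemize}
My first attempt would be to express the $d$-shift as a relative symbol interaction, in the same way relative positional offsets are treated. The training-with-random-offsets scheme then ensures that all signposts up to the test range are covered.

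A second subtlety concerns nontermination. Definition~\ref{def:cot-learnable-infinite} only quantifies over generated prefixes of bounded length. The case $F(w)=\bot$ therefore imposes no extra condition beyond prefix-wise agreement, and agreement with $M$ on every bounded prefix automatically preserves the absence of $\langle EOS\rangle$.
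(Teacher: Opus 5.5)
Your proposal is correct and follows the paper's route: the paper gives no separate proof, and obtains the corollary by combining Theorem~\ref{thm:positive} with the length-generalization guarantee of \citet{sarrof2026planning} for $\infrasp$ (with offset training of signposts), exactly as in your Steps 1--2. The signpost-output obstacle you flag is not re-derived there either, since $\operatorname{OUTPUT\_SIGNPOST}(k,d)$ is built into the CoT $\infrasp$ formalism and assumed to be covered by that guarantee (the $d$-shift is a relative symbol offset like the $\tau_k$ in $\chi$); one small correction is that the guarantee applies to the transformer selected by the regularized idealized procedure, not to every transformer that fits the training prefixes.
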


\paragraph{CoT Formatting Matters.}
Theorem~\ref{thm:finite-alphabet-barrier} and~\ref{thm:positive} emphasize how important formatting the CoT aptly is. 
PARITY is in $\TCzero$, and thus may have a CoT even without the use of any infinite alphabet, however it still needs apt formatting that is in $\CRASP$ for generalization to succeed. In Section~\ref{subsec:scratch_experiments}, we show, how with apt value change formatting (see  Figure~\ref{fig:parity-cot}) Transformers perform well at longer lengths for PARITY, while a standard CoT (that doesn't respect the constraints of $\CRASP$) doesn't. The use of signpost tokens during the TM simulation suggests why using index hints in problem formulations leads to higher performance \citep{nogueira2021investigating, bueno-etal-2022-induced, Zhou2024Transformers, ebrahimi2024your}. 

\section{Empirical Validation}

\subsection{Models trained from scratch} 
\label{subsec:scratch_experiments}

\textbf{Setup.}
We confirm our theoretical results on three algorithmic reasoning tasks: Parity, Boolean Evaluation, and S5 Permutation \citep{kim-schuster-2023-entity, li2025how} (each described in Appendix \ref{appendix:task-descriptions}). For each, we train Transformers from scratch on CoT traces augmented with signpost tokens or a value-change format, and compare their length generalization performance against baseline models trained on CoT traces without these fixes. The CoT formatting aside, all models use the same architecture and training setup.

\textbf{Model.}
We train GPT-2 models with APE  \citep{radford2019language}, $6$ layers, $8$ attention heads and embedding dimension $512$, for a total of ~$25$M parameters. We use a custom whitespace tokenizer for each task which only contains the tokens that appear in the respective CoT traces. We train models on maximum sequence lengths of 30 and 50. Models are trained with a standard language modeling objective using cross entropy loss over the full CoT trace (not the prompt) and evaluated with greedy decoding. For generalization with APE, we follow \cite{huangFormalFrameworkUnderstanding2025} and use random offsets for positions in every training instance to learn the OOD PEs. We apply a similar offset trick to train the signpost tokens for OOD lengths as proposed in \cite{sarrof2026planning}. Thus, every sample $w$, instead of starting from position $1$, can start from a random offset $o_p$, where $1 \le o_p \le (\mathsf{max\_test\_len} - |w|)$. Similarly the signpost tokens instead of always starting from $c_1$, start from an offset $o_c$, where $1 \le o_c \le (\mathsf{max\_test\_len}/2 - |w|)$. \footnote{This  helps train the signpost tokens and PEs for longer lengths. For generalization to higher lengths, the upper bounds of $o_c, o_p$ can be aptly increased.} 

\textbf{Evaluation.} We tune hyperparameters separately for each task and CoT format and evaluate 7 random seeds across 3 hyperparameter configurations respectively
(Details in Appendix~\ref{appendix:hyperparams}). 
We report results based on the top 3 performing seeds (selecting the best-performing checkpoint on the validation set for each seed) on a held-out validation set spanning both in-distribution (ID) and out-of-distribution (OOD) lengths. Our ID validation dataset contains $1000$ samples with lengths in $[1 ; \mathsf{max\_train\_len}]$ and the test OOD set contains 500 samples with lengths in $[\mathsf{max\_train\_len}+1 ; \mathsf{max\_train\_len} * 2]$ per task. Evaluation on the test set is carried out with 500 samples per test length.

\textbf{Results.} Figure~\ref{fig:50_results_main} shows that signposts and value-change encodings improve length generalization. On PARITY, \emph{value-change} yields robust generalization to $2$x lengths and a naive CoT version fails. On S5 and Boolean Evaluation, adding \emph{signpost tokens} leads to marked  improvement in length generalization compared to the version without. 
On S5, the \emph{signpost} CoT generalizes to $1.7$x of the training length, while the combination of \emph{signposts} and \emph{value-change} notably improves performance further.
We also study a binary variant of S5 to better analyze the error cases with just signposts by comparing it to the case when value-change encodings are added. We find that the model struggles to correctly update states at longer test lengths with just \emph{signposts}, and \emph{value-change} encodings solves this issue, in line with the theoretical prediction that they help in overcoming the non-unique retrieval barrier.
Interested readers can find detailed analysis about this in Appendix~\ref{appendix:s5_discussion}, \ref{appendix:binary_permutation_experiments}. 
We next ask whether the same formatting principles help in designing better prompts for pretrained LLMs and improving performance without training from scratch.

\begin{figure}
    \centering
    \includegraphics[width=1.0\linewidth]{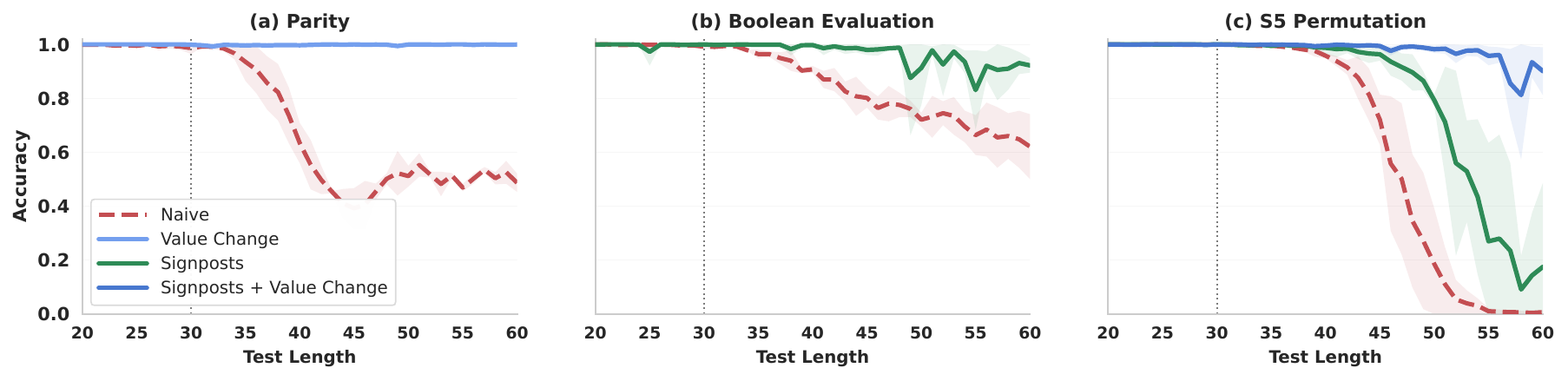}
    \caption{Length generalization from training length $30$ to test length $60$. We report results on PARITY, Boolean Evaluation and S5; (training-length 50 have similar curves, we provide them in the Appendix, Figure~\ref{fig:len_50_curve}. In all cases, the observed trends align with our theoretical predictions. Relative to the naive formulation, Signposts improve generalization on Boolean Evaluation, value-change improves generalization on PARITY, while the combined version of Signposts and value-change yields the strongest generalization on S5.}
    \label{fig:50_results_main}
\end{figure}

\subsection{Prompting LLMs}
\label{subsec:prompting_main}

\begin{figure}[t!]
    \includegraphics[width=\textwidth]{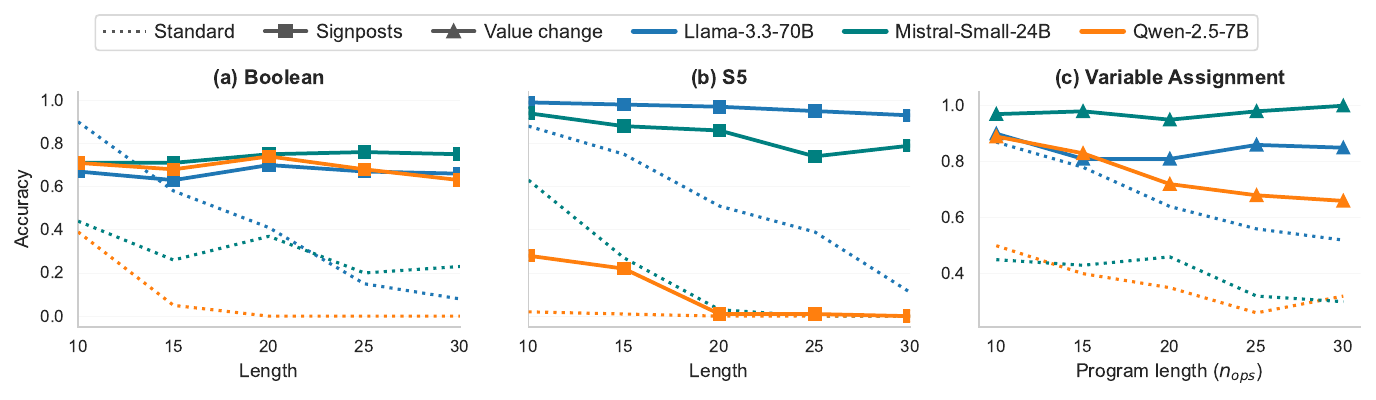}
\caption{We prompt open-source Transformer-based LLMs spanning different model families and scales to evaluate whether the theoretically principled suggestions of using signpost tokens and value-change logging can yield practical benefits in formatting inputs and CoT. In (a) and (b), we observe that instructing models to use signposts via few-shot examples improves their performance on Boolean evaluation and S5 tasks at longer sequence lengths; in contrast, performance without these tokens drops. In (c), we test the effectiveness of value-change logging by formatting the input of a code-execution task and asking \emph{only for the final output}. When the code is annotated with comments logging variable value changes, models outperform standard formatting. Our results suggest that theoretically motivated CoT and input designs can yield empirical gains out-of-the-box and also hint at promising directions for future pretraining mixtures. Prompt templates and details in Appendix \ref{appendix:prompting-experiments}.}
\label{fig:prompting-experiments}
\end{figure}

\textbf{Signposts:} Since we do not control the tokenizer or embedding space of these models, introducing entirely new signpost symbols is not straightforward. Instead, we use natural numbers as signpost-like markers: in our tasks these numbers do not otherwise appear in the input vocabulary, so they provide a simple way to introduce structured intermediate markers while remaining largely disjoint from the task tokens.
We evaluate Boolean Evaluation and S5 Permutation on 3 open-source LLMs spanning different model families and scales: \textsc{LLama-3.3-70B-Instruct-Turbo}, \textsc{Mistral-Small-24B-Instruct-2501}, \textsc{Qwen-2.5-7B-Instruct-Turbo}. We use greedy decoding throughout, and provide $5$ in-context examples in every prompt. Evaluation is done on $100$ samples at each of the lengths $10, 15, 20, 25, 30$. Figure~\ref{fig:prompting-experiments} shows that insights about CoT formatting can help improve performance across the board. On both our tasks, signpost-token prompting generally improves performance on all LLMs, with gains becoming more pronounced at larger lengths. In fact not using signposts sometimes leads to worse performance than random baselines on longer lengths owing to failure to follow instructions. Prompts with signposts do not face such an issue, providing further evidence of the utility of how keeping in mind the learnability of the underlying CoT can result in better CoT formats and in turn better performance, even without any task-specific finetuning.

\textbf{Value change:} To further show the universality of the value change idea, we consider the task of evaluating the value of a variable after various steps of code using LLMs (as depicted in Figure~\ref{fig:barrier-hero-fig}). Given a snippet of code containing many lines, where each line has a variable being assigned to a constant value, we ask what the final value of a subset of these variables is. We ask the LLM to output the final answer without any intermediate reasoning with the goal of showing that our value change strategy can be useful in formatting not just the input itself. We evaluate the same three LLMs by asking for the final answer with simply this code snippet, and with code annotated with comments that mention both the old and the new value of that variable (e.g., \texttt{\# node\_b: 1 -> 0}). As seen in Figure~\ref{fig:prompting-experiments}, having access to such value change has clear benefits for all our models, and each of them achieve much better performance compared to not having any such log.  
Our results suggest that these ideas of signposts and value change can have practical implications for LLMs. We think that practitioners could consider training LLMs with some CoT traces that take these theoretically principled suggestions into account to achieve better performance.

 \section{Discussion}
\label{sec:discussions}

\textbf{Comparison to Prior Constructions for Learnability of CoT.} \label{subsec:prior_turing}
Our work most closely relates to \citet{jiang2026softmax}, who obtained the first length-generalizable Turing completeness construction for Transformers, by extending C-RASP with \emph{custom relative positional encodings}. We complement those results in two ways: (i) we target simulation of TMs and not Counter Machines as done in \citet{jiang2026softmax}. This allows us to make statements more closely linked to most prior constructions, and establish a direct link between TM time complexity and CoT length. In contrast, \citet{jiang2026softmax}'s construction scales with the time complexity of Counter Machines, which may be higher because it relies on unary encoding as an intermediate step; (ii) we use APE and NoPE whereas \citet{jiang2026softmax} uses custom PEs that have not yet been used to train language models.  
\citet{hou2025universal} propose \emph{Turing Programs}, a CoT strategy supporting length generalization. 
Importantly, they disallow $n$-gram repetition in both input and CoT. While this mitigates the difficulty of copying, it also restricts the generality of their strategy, and is incompatible with full Turing completeness.
Finally, \citet{huang2025transformersprovablylearnchainofthought} prove length generalization with gradient descent for CoT on permutation tracking. Crucially, they also require the presence of a growing infinite alphabet with identifiers for individual steps, making their work complementary to ours.

\textbf{Relevance for Practical CoT Reasoning.} We now discuss the practical takeaways of our theoretical results.
The negative result (Theorem~\ref{thm:finite-alphabet-barrier}) states that
only problems in $\TCzero$ can be solved by C-RASP CoT programs. Under the caveats discussed at the end of Section~\ref{subsec:tc0result}, this explains why it is difficult to get transformers to length-generalize over nontrivial CoTs in practice. Our positive result is harder to map to practical situations: Theorem~\ref{thm:positive} assumes an infinite signpost alphabet, and Corollary~\ref{cor:cot-positive} is about a potentially infinite sequence of transformers that are ``trained'' in a very particular way. While the empirical results of Section~\ref{subsec:scratch_experiments} illustrate that length generalization can be boosted with ordinary gradient descent (GD) learning, it remains an essential question for future research how GD can achieve this reliably. Furthermore, real transformers do not have infinite token alphabets. However, every real transformer has a fixed context window and can therefore only ingest instances up to a fixed length $N$. A signpost alphabet of size $N/2$ is sufficient to index these positions, which makes length-generalizing training of real transformers on finite data theoretically feasible. This resonates with earlier empirical findings that larger token vocabularies can improve performance \citep{tao2024scaling,  takase2025large}. Finally, Theorem~\ref{thm:positive} relies on a  specific CoT scheme tracking value changes of tape cells. It is conceivable that earlier empirical work would have found better length generalization with a value-change CoT scheme. However, actually defining value-change CoT for real-world problems, for which it is not obvious what a Turing machine would look like, can be tricky. We still hope that our results will facilitate the design of effective CoT schemes in the future.

\section{Conclusion}
 In this paper, we revisit the role of CoT in Transformer reasoning. If we consider length-generalizable learnability (unlike  expressivity) -- we show that the learnable regime remains bounded by \(\TCzero\), under standard PEs and a finite alphabet. In contrast, with a growing alphabet, Transformers can simulate TMs. Our construction uses signpost tokens to overcome the problem of repeated copying and value-change encodings to avoid non-unique retrievals. These ideas also provide practical principles for designing CoTs, improving performance on known `hard' problems, suggesting that taking  CoT learnability into account during formatting can improve performance at large. 

\section*{Acknowledgments}
We thank Aleksandra Bakalova, Xinting Huang, Entang Wang, Andy Yang for feedback on the paper. Oliver Kraus is funded by the Deutsche Forschungsgemeinschaft (DFG, German Research Foundation) -- GRK 2853/1 “Neuroexplicit Models of Language, Vision,
and Action" - under project number 471607914. Yash Sarrof gratefully acknowledges the stimulating research environment provided by the same. Funded in part by the Deutsche Forschungsgemeinschaft (DFG, German Research Foundation) – Project number  560456343.

\section*{Declaration of LLM Usage} 
We used LLMs to help us polish the writing, find typos and inconsistencies. We also used LLMs to improve the aesthetic of the figures in the paper. Having said that, we take full responsibility for all the content in the paper.

\bibliography{bibliography}
\bibliographystyle{colm2026_conference}

\appendix

\tableofcontents

\section{Appendix}

\subsection{Sample {\CRASP} CoT program for PARITY}
\label{sec:crasp_cot_example}
\par\medskip
\noindent
\begin{tcolorbox}[
    width=\linewidth,
    colback=boxbg,
    colframe=boxframe,
    boxrule=0.5pt,
    arc=2mm,
    left=1.5mm,
    right=1.5mm,
    top=1mm,
    bottom=1mm,
    title={\textbf{A CoT C-RASP program for PARITY = $0^*\{10^*10^*\}^*$}},
    coltitle=black,
    fonttitle=\small\bfseries,    
]
\scriptsize
\renewcommand{\arraystretch}{1.08}
\[
\begin{array}{r@{\quad}l}

\multicolumn{2}{l}{
{\color{commentgray}\text{\scriptsize\sffamily Count of each symbol in the prefix, for }\sigma\in\{1,E,O,\langle SEP\rangle\}}
}
\\[0.2ex]
(1) &
C_\sigma(i) := \cttn{j\le i}{Q_\sigma(j)}
\\[0.6ex]

\multicolumn{2}{l}{
{\color{commentgray}\text{\scriptsize\sffamily Total number of generated parity-state tokens so far}}
}
\\[0.2ex]
(2) &
\mathrm{C}_{\mathrm{par}}(i) := C_E(i)+C_O(i)
\\[0.6ex]

\multicolumn{2}{l}{
{\color{commentgray}\text{\scriptsize\sffamily Is the immediately previous token a given symbol, for }\sigma\in\{E,O,\langle SEP\rangle\}}
}
\\[0.2ex]
(3) &
\mathrm{Prev}_\sigma(i) := \cttn{j\le i,\; j=i-1}{Q_\sigma(j)} \ge 1
\\[0.6ex]

\multicolumn{2}{l}{
{\color{commentgray}\text{\scriptsize\sffamily Initial phase: no parity-state token and no separator has been generated yet}}
}
\\[0.2ex]
(4) &
\mathrm{Init}(i) := (C_{\mathrm{par}}(i)=0)\land(C_{\langle SEP\rangle}(i)=0)
\\[0.6ex]
\multicolumn{2}{l}{
{\color{commentgray}\text{\scriptsize\sffamily Keep alternating the parity state until  }\#1(x)+1\text{\scriptsize\sffamily\ parity tokens have been produced}}
}
\\[0.2ex]
(5) &
\mathrm{Flip}(i) := (1\le C_{\mathrm{par}}(i))\land(C_{\mathrm{par}}(i)\le C_1(i))\land(C_{\langle SEP\rangle}(i)=0)
\\[0.6ex]

\multicolumn{2}{l}{
{\color{commentgray}\text{\scriptsize\sffamily Once the full parity trace has been generated, emit }\langle SEP\rangle\text{\scriptsize\sffamily\ to mark the start of the final answer}}
}
\\[0.2ex]
(6) &
\mathrm{EmitSEP}(i) := (C_{\mathrm{par}}(i)=C_1(i)+1)\land(C_{\langle SEP\rangle}(i)=0)
\\[0.6ex]

\multicolumn{2}{l}{
{\color{commentgray}\text{\scriptsize\sffamily After-separator phase: if the current last token is }\langle SEP\rangle\text{\scriptsize\sffamily , the next token should be the final parity answer}}
}
\\[0.2ex]
(7) &
\mathrm{AfterSEP}(i) := Q_{\langle SEP\rangle}(i)
\\[0.6ex]

\multicolumn{2}{l}{
{\color{commentgray}\text{\scriptsize\sffamily End phase: after outputting the final answer token immediately following }\langle SEP\rangle\text{\scriptsize\sffamily , emit }\langle EOS\rangle}
}
\\[0.2ex]
(8) &
\mathrm{EmitEOS}(i) := (Q_E(i)\lor Q_O(i))\land \mathrm{Prev}_{\langle SEP\rangle}(i)
\\[0.6ex]

\multicolumn{2}{l}{
{\color{commentgray}\text{\scriptsize\sffamily Output }E\text{\scriptsize\sffamily : either start with }E\text{\scriptsize\sffamily , flip from }O\text{\scriptsize\sffamily\ to }E\text{\scriptsize\sffamily\ while building the trace, or copy the final even answer after }\langle SEP\rangle}
}
\\[0.2ex]
(9) &
\operatorname{OUTPUT}(E) := \mathrm{Init}(i)\lor(\mathrm{Flip}(i)\land \mathrm{Prev}_O(i))\lor(\mathrm{AfterSEP}(i)\land \mathrm{Prev}_E(i))
\\[0.6ex]

\multicolumn{2}{l}{
{\color{commentgray}\text{\scriptsize\sffamily Output }O\text{\scriptsize\sffamily : either flip from }E\text{\scriptsize\sffamily\ to }O\text{\scriptsize\sffamily\ while building the trace, or copy the final odd answer after }\langle SEP\rangle}
}
\\[0.2ex]
(10) &
\operatorname{OUTPUT}(O) := (\mathrm{Flip}(i)\land \mathrm{Prev}_E(i))\lor(\mathrm{AfterSEP}(i)\land \mathrm{Prev}_O(i))
\\[0.6ex]

\multicolumn{2}{l}{
{\color{commentgray}\text{\scriptsize\sffamily Output the separator exactly when the chain-of-thought trace is complete}}
}
\\[0.2ex]
(11) &
\operatorname{OUTPUT}(\langle SEP\rangle) := \mathrm{EmitSEP}(i)
\\[0.6ex]

\multicolumn{2}{l}{
{\color{commentgray}\text{\scriptsize\sffamily Output }\langle EOS\rangle\text{\scriptsize\sffamily\ immediately after the final parity answer}}
}
\\[0.2ex]
(12) &
\operatorname{OUTPUT}(\langle EOS\rangle) := \mathrm{EmitEOS}(i)

\end{array}
\]
\end{tcolorbox}
\par\medskip

\subsection{Limitations of C-RASP CoTs}\label{app:theorem-1}

It was shown in \citet{jiang2026softmax} (their Proposition 4.1) that any function with super-polynomial NFA complexity has no C-RASP CoT; the same applies to $\CRASPplshort$.
This implies that certain tasks, such as copying a long string with repetitions, are out of reach for C-RASP even when CoT is allowed.
However, the result doesn't rule out C-RASP CoTs for regular languages.
Here, we show a result that even has implications for regular languages:

\begin{theorem}[Restated from Theorem~\ref{thm:finite-alphabet-barrier}]
Let $\mathcal{L} \subset \Sigma^*$, and let $F_\mathcal{L} : \Sigma^* \rightarrow \{1, 0\}$ be its indicator function (i.e., $F_\mathcal{L}(w) = 1 \Leftrightarrow w \in \mathcal{L}$). If $F_\mathcal{L}$ has a CoT in $\CRASPplshort$, then $\mathcal{L}$ is definable in $\TCzero$.
\end{theorem}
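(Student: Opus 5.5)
The idea is to reduce the autoregressive CoT run to a polynomial number of counting-threshold evaluations, each of which a constant-depth threshold circuit can perform in parallel. The only nonparallel feature of a CoT is the sequential dependence of each generated token on previous ones. The plan is to show that, for a $\CRASPplshort$ program over a \emph{finite} alphabet, this dependence collapses: the state of the computation at every step is described by polynomially bounded counts.

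\textbf{Step 1 (polynomial length and counter description).} I first show that whenever $F_\mathcal{L}(w)$ is defined, the CoT on $w$ has length polynomial in $|w|$. Proposition~4.1 of \citet{jiang2026softmax}, which bounds CoTs via NFA complexity and applies to $\CRASPplshort$, should give this; otherwise I would argue it directly from the finite state space below. Next, I would use the standard normal form of $\CRASP$. Each Boolean subformula at the last position depends only on:
\begin{itemize}
\item a constant number of counts $\cttn{j\le i}{\psi(j)}$, compared via linear threshold conditions;
\item the finitely many local and periodic positional features: the last $k$ tokens and $i \bmod m$ for a fixed modulus $m$.
\end{itemize}
Proceeding by induction on the nesting depth of counting operators, I would show the following. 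The vector of all counts of depth-$d$ subformulas over the prefix $wU_1\cdots U_t$ equals the counts over $w$ plus the counts contributed by $U_1\cdots U_t$. Within the generated part, the token sequence is determined by a finite local state together with these counters.

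\textbf{Step 2 (phase decomposition).} The key structural claim is that, after the input is fixed, the generated trace behaves like a counter system with constantly many counters whose guards are linear threshold comparisons. Each guard's truth value switches only at polynomially many crossover times. I would therefore split the trace into polynomially many phases, between consecutive guard flips. Within a phase, the output is an eventually periodic function of the local state, since the local window and the modulus are finite. Hence the counters grow linearly with rational slopes determined by that periodic pattern.

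\textbf{Step 3 (circuit).} A $\TCzero$ circuit first computes all input counts (iterated addition is in $\TCzero$). It then has to determine the sequence of phases. Each phase's length is obtained by solving linear threshold equations, which needs only division and multiplication, both in $\TCzero$. The main obstacle is that the number of phases depends on the input, so the phases cannot simply be simulated sequentially. To handle this, I would bound the number of \emph{distinct} guard-flip events by a constant: each guard compares monotone-in-time linear functions, so along one linear trajectory it flips at most once. The number of phases is then a constant depending only on the program, and a constant-depth composition of $\TCzero$ blocks suffices.

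If that monotonicity fails, for instance because counters of different symbols oscillate as in the PARITY example, I would fall back on the following. Within a phase the evolution is periodic modulo constants, so the circuit can compute the answer token position and the residue classes in $\TCzero$ by arithmetic on the counts. Finally, the output symbol following $\langle SEP\rangle$ is read off, yielding a $\TCzero$ definition of $\mathcal{L}$.
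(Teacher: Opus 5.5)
Your Step~1 has the right ingredient: the counts at generated positions are the counts at the end of the input plus contributions from generated tokens. The plan to then \emph{simulate} the CoT dynamics explicitly in the circuit does not work, and it already fails at Step~1. A $\CRASPplshort$ CoT can simulate a multi-counter machine. The counters are differences of token counts, e.g.\ $x(i) = C_a(i) - C_{\mathrm{dec}_x}(i)$ and $y(i) = C_{\mathrm{inc}_y}(i) - C_{\mathrm{dec}_y}(i)$. Zero tests are comparisons $C_1(i) \le C_2(i) \wedge C_2(i) \le C_1(i)$, and the finite control is carried by the current token via $Q_\sigma(i)$. Such a machine can build $2^{\#_a(w)}$ in a counter and count it down before emitting $\langle SEP\rangle$, so CoTs need not have polynomial length. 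Proposition~4.1 of \citet{jiang2026softmax} bounds the NFA complexity of the computed function, not the trace length. Your fallback ``finite state space'' is also not finite, because the counters are unbounded.

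The constant-phase claim in Step~3 fails for the same reason. Guards compare linear combinations \emph{with subtraction}, so they are not monotone in time, and each guard flip changes the slopes. In the doubling loop, the zero tests flip $\Theta(\#_a(w))$ times with phase lengths doubling each time. Your fallback does not say how to compose an unbounded number of phases in constant depth.

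More fundamentally, no explicit simulation of this kind can succeed in general. The counter machine can evaluate $g(\#_a(w))$ for any total computable $g$, so such CoTs decide unary languages outside $\mathbf{P}$, and hence outside any uniform $\TCzero$. The theorem can only hold for non-uniform $\TCzero$, and a proof must exploit non-uniformity.

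The missing idea, which the paper adapts from the Alice--Bob argument of Theorem~12 in \citet{huangFormalFrameworkUnderstanding2025}, is an information bottleneck instead of a simulation. Push your Step~1 observation to its conclusion. By induction over CoT positions and program operations, every value computed at a generated position depends on $w$ only through a small summary:
\begin{itemize}
\item the values $C(L-l),\dots,C(L)$ of the finitely many count terms at the last $l$ input positions, with $l$ fixed by the local relations;
\item the last $l$ input tokens;
\item $L \bmod m$.
\end{itemize}
So the whole trace, and in particular the label after $\langle SEP\rangle$, is a function of this $O(\log N)$-bit summary, however long the CoT runs. A $\TCzero$ circuit computes the summary in binary by iterated addition. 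Since only polynomially many summaries are possible, it then hard-wires a lookup table (a polynomial-size DNF) from summaries to labels. This needs no bound on CoT length and no analysis of the dynamics.
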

Remarkably, this holds irrespective of the length of the CoT.

\begin{proof}
The proof idea is an adaptation of Theorem 12 in \citet{huangFormalFrameworkUnderstanding2025}. Previously, \citet{jiang2026softmax} had used that theorem to show that C-RASP CoTs are not Turing-complete. Here, we strengthen this to showing membership in $\TCzero$.

    We give the input to Alice, and have Bob compute the CoT.   
Let $\mathcal{C}$ be the set of all count terms in the C-RASP program; each of the form $C(i) := \# [j \leq i, \phi(j,i)] \psi(j) \wedge P(j)$ where $\phi$ is local or $\top$, and $\psi$ is periodic or $\top$.
Let $L$ be the length of the input.
In order to allow Bob to compute the CoT, it is sufficient for Alice to communicate $C(L-l), \dots, C(L)$ for each $C \in \mathcal{C}$, where $l$ is a bound determined by the set of local functions included.

We now build a $\TCzero$ circuit deciding $\mathcal{L}$.
We compute the binary representation of each $C(L-l), \dots, C(L)$ for each $C \in \mathcal{C}$.
Each of these has $O(\log N)$ bits, when $N$ is the input length.
We thus obtain a string of length $O(l \cdot |\mathcal{C}| \cdot \log N)$.
By assumption, we can determine Bob's output (and thus membership in $\mathcal{L}$) from this string.
As the set of such strings has cardinality polynomial in $N$, we can hard-code a lookup table from this string to Bob's output while staying in $\TCzero$.

Alice can communicate $O(\log N)$ bits that are sufficient for Bob to compute the CoT.
As these are counts definable in C-RASP, these bits are computable in $\TCzero$.
Now, a $\TCzero$ circuit can hard-code the mapping from these $O(\log N)$-bit-length messages to the membership labels that result at the end of the CoTs.
Hence, these labels are computable by $\TCzero$ circuits.
\end{proof}

\begin{lemma}[Failure of Length Generalization outside of $\TCzero$]
    When $\mathcal{L}$ is not in $\TCzero$, then, for some $f_{TestLength} : \mathbb{N} \rightarrow \mathbb{N}$, under CoT learning notion of Definition~\ref{def:cot-learnable-finite} based on the learning model of \citet{huangFormalFrameworkUnderstanding2025}, length generalization from training length $n$ to test length $f_{TestLength}(n)$ will fail for infinitely many $n$.
\end{lemma}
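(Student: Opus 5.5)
We argue by contraposition, combining Theorem~\ref{thm:finite-alphabet-barrier} with the main guarantee of \citet{huangFormalFrameworkUnderstanding2025} and its CoT adaptation in \citet{jiang2026softmax}. That guarantee roughly says that, under the idealized learning procedure, length generalization holds for all $f_{TestLength}$ exactly when the target next-token map is expressible in $\CRASPplshort$. The direction we need is the converse: if length generalization succeeds for all sufficiently large $n$, then the learned limit object is a $\CRASPplshort$ program.

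\textbf{Step 1 (setup).} Suppose $\mathcal{L}\notin\TCzero$. Assume for contradiction that for every $f_{TestLength}$, generalization from $n$ to $f_{TestLength}(n)$ fails for only finitely many $n$. Choose $f_{TestLength}$ growing fast, for instance $f_{TestLength}(n)\ge 2n$ or any divergent function, as the cited theorem requires.

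\textbf{Step 2 (extract a limit program).} For the next-token map $M$ computing $F_\mathcal{L}$ in Definition~\ref{def:cot-learnable-finite}, consider the sequence $T_n$. Because of the regularizer, the $T_n$ have uniformly bounded description complexity. So along a subsequence they stabilize to a single limit transformer, from a hypothesis class of finite-precision, translation-invariant, local APE/NoPE transformers, which agrees with $M$ on all CoT prefixes. We then invoke the translation from limit transformers to $\CRASPplshort$ (Theorem~12 of \citet{huangFormalFrameworkUnderstanding2025}) to obtain a CoT $\CRASPplshort$ program $S$. Each output clause $\operatorname{OUTPUT}(\sigma)$ is given by thresholding the finite logit set, and $S$ generates the same trace as $M$ on every input. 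In particular $S$ emits $\langle SEP\rangle F_\mathcal{L}(w)\langle EOS\rangle$, so $F_\mathcal{L}$ has a CoT in $\CRASPplshort$.

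\textbf{Step 3 (conclude).} Theorem~\ref{thm:finite-alphabet-barrier} then gives $\mathcal{L}\in\TCzero$, a contradiction. Hence for this $f_{TestLength}$, failure occurs for infinitely many $n$.

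\textbf{Main obstacle.} Step 2 is the hard part. The cited theorem is phrased for language recognition, where a single output is read at the last position. Here we need agreement on \emph{all} CoT prefixes of unbounded length, and the full trace may be much longer than the input. We would first handle this by applying the theorem per output symbol: each indicator ``next token is $\sigma$'' is a language over $\Xi^*$ restricted to valid prefixes, and we intersect with the finite-precision argmax. A second concern is that the learned limit might agree with $M$ only on prefixes actually generated, not on arbitrary strings. This is harmless, since Theorem~\ref{thm:finite-alphabet-barrier} only uses the program's behavior along generated traces.
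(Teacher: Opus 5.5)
There is a genuine gap in Step~2. You invoke a translation from Limit Transformers to $\CRASPplshort$, but no such translation is available. Only the direction from $\CRASPplshort$ programs to Limit Transformers is known, and the paper notes explicitly that the converse is not known. Theorem~12 of \citet{huangFormalFrameworkUnderstanding2025} is not a translation theorem. It is the communication-complexity result: for a Limit Transformer, Alice can compress everything Bob needs into $O(\log N)$ bits.

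So the learning-theoretic characterization (length generalization for arbitrary test lengths iff expressibility by a Limit Transformer) gives you only a Limit Transformer producing the CoT. You cannot feed that into Theorem~\ref{thm:finite-alphabet-barrier}, whose hypothesis is a $\CRASPplshort$ CoT. The issues you flag as the ``main obstacle'' (prefixes versus recognition, agreement only on generated traces) are not where the argument breaks.

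The missing idea is to redo the $\TCzero$ argument of Theorem~\ref{thm:finite-alphabet-barrier} directly at the level of Limit Transformers, without going through $\CRASPplshort$. This takes three steps:
\begin{enumerate}
  \item Use the proof of Theorem~12 of \citet{huangFormalFrameworkUnderstanding2025} to get an $O(\log N)$-bit summary of the input that determines Bob's whole CoT, and hence the label.
  \item Show this summary is $\TCzero$-computable. Limit Transformers can be simulated by log-precision transformers: the MLPs use only ReLU and Heaviside activations, the $\phi(\cdot,\cdot)$ terms in the attention logits are unproblematic, and the rounding of attention logits keeps attention scores computable in log precision. Log-precision transformers are in $\TCzero$ by \citet{merrill2023parallelism}.
  \item Since there are only polynomially many possible messages, a $\TCzero$ circuit can hard-code the lookup from message to label.
\end{enumerate}
This puts $\mathcal{L}$ in $\TCzero$, a contradiction. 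With Step~3 replaced by this direct argument, your contrapositive setup in Step~1 and the limit-object extraction in Step~2 go through.
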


\begin{proof}
\cite{huangFormalFrameworkUnderstanding2025} show that, in the idealized learning model, length generalization for arbitrary test lengths holds if and only if the function is expressible by \emph{Limit Transformers}, an idealized computational model used for modeling the learnability of standard transformers.
While C-RASP[Pos] can be translated to Limit Transformers, a converse is not known; nonetheless we now show that a result analogous to Theorem~\ref{thm:finite-alphabet-barrier} holds if we start not from C-RASP, but from Limit Transformers: for these, as shown in the proof of Theorem 12 in \cite{huangFormalFrameworkUnderstanding2025}, Alice can likewise summarize all information relevant to Bob into $O(\log N)$ bits. We now argue that Limit Transformers can be simulated by log-precision transformers.
First, this is clear for the feedforward networks, which only use ReLU and Heaviside activations. It is also clear for the $\phi(\cdot,\cdot)$ functions used in the attention logits of limit transformers.
Furthermore, due to the rounding used in the attention logits of the definition of limit transformers, attention scores are also computable in log-precision.
Thus, overall, Limit Transformers can be simulated by log-precision transformers, and thus are in $\TCzero$ \citep{merrill2023parallelism}.
\end{proof}

\subsection{Operations in {\CRASP} and {\infrasp}}

\emph{$\infrasp$} has an extra predicate over $\CRASPplshort$ \citep{huangFormalFrameworkUnderstanding2025} which acts over the infinite alphabet $\mathcal{C}$ which in fact can be assumed without loss of generality to be congruent to Natural numbers (and thus have a natural ordering) $\mathcal{C} \cong  \mathbb{N}$. This predicate, termed  a \textbf{Match Predicate} $\chi(i,j)$   checks for equality between tokens in the neighborhoods of $i$ (the current position) and (some past position) $j$. Formally,  $\chi(i,j) := \bigwedge_{k=1}^K (\vc_{j-\delta_k} = \vc_{i-\gamma_k} + \tau_k)$.  Here, $\delta_k, \gamma_k \in \mathbb{N}$ and $\tau_k \in \mathbb{Z}$ are small constants, $c_i$ is the token at position $i$. $\chi(i,j)$ is True if position $j-\delta_k$ and $i-\gamma_k$ represent the same number (where position $i$ could be shifted by a small number $\tau_k$)) for all $k$.

This $\chi(i, j)$ matches tokens based on local constants $\delta, \gamma, \tau$, without  memorizing identities of any token from $\mathcal{C}$. It performs simple pattern matches in local neighborhoods throughout the context window. 

On top of adding this predicate to $\CRASPplshort$, $\infrasp$ also removes the periodic operation $\phi(i)$ which could be set by the absolute positional embeddings in Transformers and allowed a setting of a periodic value based on the current position of the token, instead of the value at the token (\cite{sarrof2026planning} provides a discussion behind these changes). A $\infrasp$ program $P$ formally consists of a sequence $P_1,\ldots, P_k$ of operations, where each operation is either Boolean-valued or count-valued. (Full list of rules in Table~\ref{tab:infrasp-ops}). It should be noted that the alphabet of this program is $\Sigma \cup \mathcal{C}$, where $\Sigma$ is finite, and $\mathcal{C} \cong \mathbb{N}$.  Crucially, only $\Sigma$ can be found with unary queries $Q_\sigma$, while symbols from $\mathcal{C}$ can be matched only through $\chi(i, j)$. 
$\Psi$ is a set of \emph{binary relations} $\psi : \mathbb{N} \times \mathbb{N} \rightarrow \{0,1\}$. Counting operations return the number of positions $j\leq i$ where $P(j)$ and 
$\psi(i,j)$ which represents functions such as $i = j + \delta$, for a fixed $\delta$, thus acting as a local relation ($\delta$ cannot be arbitrarily large) for checking things in local neighborhoods. 
If a task has a $\infrasp$ program, then it is guaranteed to length generalize on it (under the learning model of \citet{sarrof2026planning}) for APE transformers. If no $\psi \in \Psi$ is used and all the constants in the match predicate are $0$, then the guarantee holds for NoPE transformers as well. 

\begin{table}[t]
\centering
\small
\begin{tabular}[t]{@{}l l@{}}
    \toprule
    \multicolumn{2}{c}{\textbf{Boolean-Valued Operations}} \\
    \midrule
    \textbf{Initial} & $P(i):=Q_\sigma(i)$ \quad for $\sigma\in\Sigma$\\
    \addlinespace
    \textbf{Boolean} & $P(i):=\lnot P_1(i)$ \\
                     & $P(i):=P_1(i)\land  P_2(i)$\\
    \addlinespace
    \textcolor{red}{\textbf{Positional}} & $P(i):= \phi(i)$ for $\phi \in \Phi$\\
    \addlinespace
    \textbf{Constant}& $P(i):=\top$\\
    \addlinespace
    \textbf{Comparison} & $P(i):=C_1(i)\leq C_2(i)$\\
    \bottomrule
\end{tabular}%
\hspace{2em}
\begin{tabular}[t]{@{}l l@{}}
    \toprule
    \multicolumn{2}{c}{\textbf{Count-Valued Operations}} \\
    \midrule
    \textbf{Counting} & $C(i):=\cttn{j\leq i, \psi(i,j)}{P(j)}$ \\
                      & \quad for $\psi\in\Psi\cup\{\top\}$\\
    \addlinespace
    \textcolor{green}{\textbf{Match}} & $C(i) := \cttn{j \le i, \chi(i,j)}$ \\
    \addlinespace
    \textbf{Conditional} & $C(i):=\cif{P(i)}{C_1(i)}{C_2(i)}$\\
    \addlinespace
    \textbf{Add/Subtract}    & $C(i):=C_1(i) \pm C_2(i)$\\
    \addlinespace
    \textbf{Constant}    & $C(i):=1$\\
    \bottomrule
\end{tabular}
\caption{Operations in $\CRASPplshort$  and $\infrasp$ programs. The operations in \textcolor{red}{red} and \textcolor{green}{green} denote the deletion and addition of operations from $\CRASPplshort$ to $\infrasp$.}
\label{tab:infrasp-ops}
\end{table}

\subsection{Turing Completeness of {\infrasp}}

\begin{theorem} \label{thm:positive_restate}
\textit{($C^*$-RASP with CoT is Turing-complete)}
Let $F$ be a semi-computable function computed by a Turing machine $\mathcal{T}$.
Then there is a $C^*$-RASP program $S$ computing it.

\textit{($C^*$-RASP CoTs have length bounded by Turing machine time complexity)}
Furthermore, there is a universal constant $C$ such that, when $\mathcal{T}$ terminates on $w$ in $N$ steps, then $S$ generates a string of length $C \cdot N_{tapes} \cdot (N + |w|)$ ending in $\langle EOS \rangle$ on $Annotate(w)$, where $N_{tapes}$ is the number of tapes of $\mathcal{T}$.
\end{theorem}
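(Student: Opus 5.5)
We give an explicit CoT $\infrasp$ program that simulates $\mathcal{T}$ step by step, following the block format of Figure~\ref{fig:tm_visualize}. We first treat a single-tape machine with tape alphabet $\Gamma$ and state set $Q$, and then explain the multi-tape extension. The CoT is a sequence of blocks. Each block has the form $q\; c\; \langle | \rangle\; (a\!\to\! b)\,$ or $\langle \mathrm{nochange}\rangle$, where $q\in Q$ is a finite token, $c\in\mathcal C$ is the signpost of the cell under the head, $\langle|\rangle$ is a separator, and $(a\to b)$ is a single finite token recording that the cell changed from $a$ to $b$. A no-change token is emitted when the written symbol equals the read one. All of these tokens lie in the finite alphabet $\Xi$, which we enlarge by $Q$, $\Gamma\times\Gamma$ and a few markers. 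Blocks therefore have constant length.

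\emph{Step 1: initialization.} The annotated input $w_1c_1\cdots w_nc_n$ already fixes the initial contents of cells $c_1,\dots,c_n$. Every other cell is blank, including cells whose signposts lie beyond $c_n$ or below $c_1$. Boundary cells need a little care: we either use a one-way tape, or shift the start, using the offset convention so that smaller signposts exist. The first block is $q_0\,c_1\,\langle|\rangle$. Here $c_1$ is produced by $\operatorname{OUTPUT\_SIGNPOST}(k,0)$, with $k$ pointing back to the first signpost; this is detected by a count term such as ``no finite CoT token has occurred yet''.

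\emph{Step 2: reading the current cell (main obstacle).} At a separator position $i$, the state and signpost sit at fixed offsets $i-2$ and $i-1$, so the state can be read with a local counting term $\cttn{j\le i, j=i-2}{Q_q(j)}\ge 1$. The difficulty is recovering the current symbol of the cell $c_{i-1}$ without last-occurrence retrieval. We use the value-change encoding for this. For each pair $a\neq b$, define
\[
N_{a\to b}(i) := \cttn{j\le i, \chi(i,j)}{},
\]
where $\chi$ requires $c_{j-\delta}=c_{i-1}$ and the token at $j$ to be $(a\to b)$. In the block format, the change token sits at a fixed offset after the signpost of its block, so $\delta$ is a small constant. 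To combine the match with the unary predicate $Q_{(a\to b)}(j)$, we first materialize a per-position Boolean and, if needed, use the conditional operation; we would check that the match operation in Table~\ref{tab:infrasp-ops} can be conjoined with a predicate in this way. Let $v_0$ be the initial symbol of the cell. It is detected by a match between $c_{i-1}$ and an input signpost at position $j$, with $Q_\sigma(j-1)$ giving the value, and it is blank if no such match exists. The changes applied to a cell form a path $v_0\to v_1\to\cdots\to v_m$. So the current value $x$ is the unique symbol whose in-degree minus out-degree, counted over all $N_{a\to b}$ plus the initial indicator, equals $1$. Each such comparison is a linear combination of counts, hence expressible in $\infrasp$ with finitely many comparisons, since $|\Gamma|$ is finite. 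This is the key step that avoids non-unique retrieval.

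\emph{Step 3: transition and emission.} Given $(q,x)$, the transition $\delta(q,x)=(q',y,d)$ is a finite Boolean case split. From the separator we output $(x\to y)$ if $y\neq x$ and $\langle\mathrm{nochange}\rangle$ otherwise. Next comes $q'$, where the state and symbol are recomputed from fixed offsets. Then we emit the signpost via $\operatorname{OUTPUT\_SIGNPOST}(k,d)$, with $k$ pointing back to the previous block's signpost, and finally a separator. When $q'$ is halting, we instead enter an output phase. In that phase we emit $\langle SEP\rangle$ and then, for successive signposts $c_1,c_2,\dots$, each signpost followed by the current value of its cell, computed as in Step 2 with the match applied to the just-emitted signpost. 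This continues until a blank is read, after which we emit $\langle EOS\rangle$, producing $\langle SEP\rangle\,\mathsf{Annotate}(F(w))\langle EOS\rangle$. Uniqueness of the active output clause follows by making the phase and offset conditions mutually exclusive. Non-termination yields no $\langle EOS\rangle$.

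\emph{Length and multiple tapes.} Each simulated step costs $O(1)$ tokens, and the output phase costs $O(|F(w)|)\le O(N+|w|)$ tokens. With $N_{tapes}$ tapes, each block holds one (state, signpost, change) group per tape. We give each tape its own finite tag so that matches stay tape-specific. This yields length $C\cdot N_{tapes}\cdot(N+|w|)$.
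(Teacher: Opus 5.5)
Your proposal follows essentially the paper's construction: the same $q\,c\,\$\,e$ block with a single change token, match-counts $N_{\sigma\to\tau}$ of logged changes at the head's signpost, the initial-content indicator read off $\mathsf{Annotate}(w)$, the balance-equals-one criterion for the current symbol (your path argument $v_0\to\cdots\to v_m$ justifies it cleanly), a finite case split on $\delta$, and the same length and multi-tape accounting. Two small corrections: the paper uses the filtered count $\cttn{j\le i,\chi(i,j)}{Q_{\mathrm{WRITE}(\sigma\to\tau)}(j)}$ directly, and your conditional-operation fallback would not work because it branches on $i$, not on $j$; and since $k$ in $\operatorname{OUTPUT\_SIGNPOST}(k,d)$ is a fixed offset, you cannot copy $c_1$ by pointing back to it --- the paper glosses over this step too, but a leftward walk from $c_n$ with $d=-1$, stopped once the emitted signpost matches the input's first signpost, costs only $O(|w|)$ tokens and fits your bound.
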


\begin{proof}
We explain the proof in the special case of a single-tape Turing machine to avoid notational clutter.
For the more general case of a multi-tape Turing machine, it suffices to add identifiers for the (finitely many) tapes to $\Gamma$ and pair each value change log with a tape identifier indicating which tape the change applies to, and always keep track of the head position for each tape. The last part leads to an increase of the length of each block by a factor of $N_{tapes}$.

We assume the convention that the output of the TM is $\bot$ when it does not terminate, and the final nonblank prefix of a designated tape (or the single tape, if there is just one) at termination.

Let $F:\Sigma^*\to\Sigma^*\cup\{\bot\}$ be a semi-computable function. We define our  deterministic Turing machine as follows
\[
\mathcal T=(Q,\Gamma,\sqcup ,\delta,q_0,H)
\]
computing $F$, where:
\begin{enumerate}
    \item $Q$ is a finite set of states;
    \item $\Gamma$ is a finite tape alphabet with $\Sigma\subseteq \Gamma$;
    \item $\sqcup \in\Gamma\setminus\Sigma$ is the blank symbol;
    \item $q_0\in Q$ is the initial state;
    \item $H\subseteq Q$ is the set of halting states;
    \item $\delta:(Q\setminus H)\times \Gamma \to Q\times \Gamma\times\{L,R,S\}$
    is the transition function ($L, R, S$ correspond to moving left, right or staying).
\end{enumerate}

We use a one-sided tape indexed by \(\mathbb N_{>0}=\{1,2,3,\dots\}\), and every head initially points to position \(1\). On input $w=w_1\cdots w_n\in\Sigma^*$
the tape initially contains \(w_1,\dots,w_n\) in cells \(1,\dots,n\), and every other cell initially contains \(\sqcup \).
\footnote{We fix the one-sided-tape convention that if the head is at cell \(1\) and the transition prescribes a left move, the head remains at cell \(1\).}

We construct a CoT \(C^*\)-RASP program \(P\) which, on input \(\mathsf{Annotate}(w)\), simulates the run of \(\mathcal T\) on \(w\). If \(\mathcal T\) halts, then \(P\) eventually generates $\langle SEP\rangle\,\mathsf{Annotate}(F(w))\,\langle EOS\rangle$.
If \(\mathcal T\) does not halt, then \(P\) never generates \(\langle EOS\rangle\).

\paragraph{Finite and infinite parts of the CoT alphabet.}
Recall that the CoT in the infinite-alphabet setting generates tokens from a fixed finite alphabet together with the increasing alphabet \(\mathcal C\) of signpost tokens. In this construction we use $\mathcal C=\{c_1,c_2,c_3,\dots\}$ The finite control alphabet \(\Xi\) is chosen to contain:
\begin{enumerate}
    \item all state symbols \(q\in Q\);
    \item the special symbols \(\langle SEP\rangle\), \(\langle EOS\rangle\), 
    and a block separator \(\$\);
    \item the symbol $\mathrm{KEEP}$ to show that no operation is taking place on the cell. 
    \item for each pair of distinct symbols \(\sigma,\tau\in\Gamma\), a symbol
    $\mathrm{WRITE}(\sigma\to\tau)$. 
    
\end{enumerate}
Thus the generated CoT is a string over $\Xi\cup\mathcal C$.

\paragraph{Annotated input.}
For $w=w_1\cdots w_n\in\Sigma^*$,
we use the annotated encoding
$\mathsf{Annotate}(w):=w_1c_1\,w_2c_2\cdots w_nc_n$.
Hence each input symbol is followed immediately by its signpost token.

\paragraph{Notation.} For each simulated time step \(m\ge 0\) 

\begin{itemize}

\item Let $\eta_{m}\in\mathbb N_{>0}$ be the position of the head after \(m\) steps. 

\item Let $q_m\in Q$ be the machine state after \(m\) steps

\item Let $\xi_{m}(r)\in\Gamma$
be the symbol stored in cell \(r\) after \(m\) steps.
\end{itemize}

\paragraph{Block format.}
For each simulated step \(m\ge 0\), the CoT contains one block
\[
B_m :=
q_m\;
c_{\eta_{m}}\,\$ \, e_{m}\;
\]
where
\[
e_{m}=
\begin{cases}
\mathrm{KEEP}, & \text{transition at step }m\text{ leaves the symbol on the cell }  c_{\eta_{m}} \text{ unchanged},\\[1mm]
\mathrm{WRITE}(\sigma\to\tau), & \text{transition at step }m\text{ changes the symbol on the cell } c_{\eta_{m}}\text{ from }\sigma\text{ to }\tau,
\end{cases}
\]
with \(\sigma\neq\tau\) in the second case.

The tape contents are therefore not stored explicitly. Instead, the current content of a tape cell is reconstructed from its initial value together with all logged value-change events associated with the same index symbol.

\paragraph{Current state.}
For each \(q\in Q\), define
\[
\mathrm{CURR\mbox{-}STATE}_q(i)
:=Q_{\$}(i)\wedge Q_q(i-2).
\]
Exactly one of these predicates holds at the each $\$$ in a block.

\paragraph{Matching the current head cell.}
Define the match predicate
\[
\chi(i,j):=\bigl(c_{j-2}=c_{i-1}\bigr).
\]
Intuitively, \(\chi(i,j)\) says that the event encoded at position \(j\) happened to the same tape cell as the current cell in the current block whose $\$$ lies at position $i$. The offsets here are $2$ and $1$ because the cell identity is $2$ positions away from the write events and cell identity is $1$ position away from the $\$$ symbol.

\paragraph{Counting value-change events.}
For each each pair of distinct symbols \(\sigma,\tau\in\Gamma\), define
\[
N_{\sigma\to\tau}(i)
:=
\# [j\le i,\chi(i,j) \land  Q_{\mathrm{WRITE}(\sigma\to\tau)}(j)].
\]
Thus \(N_{\sigma\to\tau}(i)\) counts the number of logged write-events of type \(\sigma\to\tau\) at the cell currently under our head.

\paragraph{Initial contents.}
We next define, for each \(\tau\in\Gamma\), a Boolean predicate \(\mathrm{INIT}_{\tau}(i)\) which evaluates to true if the tape cell currently under the head initially contained \(\tau\).

The initial content is determined by the annotated input. Thus, if we are evaluating at the $\$$ token, we want to check if the cell token at position $i-1$ when it appeared in the input (say $j$), which token it had, i.e. if it matched some input character at position $j-1$. First we need to match the cell token itself. Thus, 
\[
\chi_{\mathrm{inp}}(i,j):=(c_{j}=c_{i-1}).
\]

Thus, this says that the index token at input position \(j\) is the same as the current head cell. Now, for each \(\tau\in\Sigma\), we check if the symbol right before the current position is a specific character.   

\[ \text{PrevPosIsSymbol}_\tau(i) :=\cttn{i=j+1}Q_\tau(j)\]

Thus, our overall check is simply the position where both of these things are true.

\[
\mathrm{INIT}_{\tau}(i)
:=
Q_{\$}(i)\wedge
\left(
\# [j\le i,\chi_{\mathrm{inp}}(i,j) \land  \text{PrevPosIsSymbol}_\tau(j)]
\right)\ge 1.
\]
Since each input index token occurs at most once in \(\mathsf{Annotate}(w)\), this holds exactly when the current head position on our tape corresponds to an input cell whose symbol is \(\tau\).
\footnote{For the blank symbol, define \(\mathrm{INIT}_{\sqcup}(i)\) to hold exactly when the current head marker \(c_{i-1}\) does not occur in the annotated input, i.e.\ when no input signpost matches the current cell. This ensures that cells outside the original input are initialized to \(\sqcup\).}

\paragraph{Recovering the current symbol under the head.}
For each \(\tau\in\Gamma\), define
\[
I_{\tau}(i):=\cif{\mathrm{INIT}_{\tau}(i)}{1}{0}
\]
and
\[
B_{\tau}(i)
:=
I_{\tau}(i)
+
\sum_{\sigma\in\Gamma\setminus\{\tau\}} N_{\sigma\to\tau}(i)
-
\sum_{\sigma\in\Gamma\setminus\{\tau\}} N_{\tau\to\sigma}(i).
\]
Intuitively, \(B_{\tau}(i)\) is the net balance of symbol \(\tau\) at the current head cell: it starts with the initial symbol, gains one for every change into \(\tau\), and loses one for every change out of \(\tau\).

Now define
\[
\mathrm{CURR\mbox{-}SYMB}_{\tau}(i)
:=
Q_{\$}(i)\wedge (B_{\tau}(i)=1).
\]

The key invariant is that, exactly one \(\tau\in\Gamma\) satisfies \(\mathrm{CURR\mbox{-}SYMB}_{\tau}(i)\), namely the actual current symbol stored at the head position.
\paragraph{Predicates for the next transition.}
Since \(\delta\) is finite, all information about the next transition can be obtained by an explicit finite case distinction over the current state and the current symbol under the TM head. To enhance readability, we will slightly abuse the notation, and assume that each of the $\bigvee$ symbols in the next few predicates all have the same suffix, $q\in Q\setminus H,\ \tau \in\Gamma ; \quad  
\delta(q,\tau)=(q',\tau',d)$ where $d \in \{\text{L,R,S}\}$.

For each \(q'\in Q\), define
\[
\mathrm{NEXT\mbox{-}Q}_{q'}(i)
:=
Q_{\$}(i)\wedge
\bigvee
\left(
\mathrm{CURR\mbox{-}STATE}_q(i)\wedge
\mathrm{CURR\mbox{-}SYMB}_{\tau}(i)
\right)
\]
Since our transition function is deterministic, \(\mathrm{NEXT}_{q'}(i)\) holds exactly when the next state of the Turing machine \(q'\) is computed from the current state $q$ and value $\tau$ under the TM head, and all the other conditions within the $\bigvee$ will evaluate to false. 

Similarly, we define our writes
\[
\mathrm{NEXT}_{\mathrm{KEEP}}(i)
:=
Q_{\$}(i)\wedge
\bigvee_{\tau'=\tau}
\left(
\mathrm{CURR\mbox{-}STATE}_q(i)\wedge
\mathrm{CURR\mbox{-}SYMB}_{\tau}(i)
\right),
\]
and, for each pair of distinct symbols \(\tau, \sigma \in\Gamma\),
\[
\mathrm{NEXT}_{\tau\to\sigma}(i)
:=
Q_{\$}(i)\wedge
\bigvee_{\tau'=\sigma}
\left(
\mathrm{CURR\mbox{-}STATE}_q(i)\wedge
 \mathrm{CURR\mbox{-}SYMB}_{\tau}(i)
\right).
\]
Thus \(\mathrm{NEXT}_{\mathrm{KEEP}}(i)\) says that the transition leaves the current symbol unchanged, whereas \(\mathrm{NEXT}_{\tau\to\sigma}(i)\) says that the cell changes its current symbol from \(\tau\) to \(\sigma\).

Likewise, for each direction \(D\in\{L,R,S\}\), define
\[
\mathrm{MOVE}_{D}(i)
:=
Q_{\$}(i)\wedge
\bigvee_{d=D}
\left(
\mathrm{CURR\mbox{-}STATE}_q(i)\wedge
\mathrm{CURR\mbox{-}SYMB}_{\tau}(i)
\right).
\]

Hence, at the end of a completed block, the predicates \(\mathrm{NEXT}_{q'}\), \(\mathrm{NEXT}_{\mathrm{KEEP}}\), \(\mathrm{NEXT}_{\tau\to\sigma}\), and \(\mathrm{MOVE}_{D}\) are all definable by finite disjunctions over the transition table of \(\mathcal T\).

\paragraph{Generating the next block token by token.}
We now explain how these predicates are used to generate the next block in the CoT.

A completed block has the form $q_m c_{\eta_{m}} \$ e_{m}\;$.

Since, we can only the $e_{m}$ the event that should happen at the current cell after the transition has happened. Thus at the $\$$ token, we will autoregressively first generate the new write action, and then the new state and the new cell position token, ending with the generation of the $\$$ token itself. These steps loop until we get to a halting state. 

$q_m c_{\eta_{m}} \$ \to q_m c_{\eta_{m}} \$e_m \to q_m c_{\eta_{m}} \$e_mq_{m+1} \to q_m c_{\eta_{m}} \$e_mq_{m+1}c_{\eta_{m+1}} \to q_m c_{\eta_{m}} \$e_mq_{m+1}c_{\eta_{m+1}}\$$ 

We will now use our output clauses to generate these tokens.

\begin{itemize}
    \item \emph{New Write Action:}
We generate the new write action at our current head first i.e. the token \(e_{m}\). We evaluate at our $\$$ token, and hence we can evaluate our relevant predicates \(\mathrm{NEXT}_{\mathrm{KEEP}}\) and \(\mathrm{NEXT}_{\tau\to\sigma}\) correctly there and output the next token correctly.

Thus, we include the output clauses
\[
\operatorname{OUTPUT}(\mathrm{KEEP})\defeq 
(Q_\$(i) \land  \mathrm{NEXT}_{\mathrm{KEEP}}(i))
\]
and
\[
\operatorname{OUTPUT}(\mathrm{WRITE}_{\tau\to\sigma})
\defeq
(Q_\$(i) \land  \mathrm{NEXT}_{\tau\to\sigma}(i)),
\]

Again, because \(\delta\) is deterministic, exactly one such write clause can ever be active.

\item \emph{State slot:}
Next, we want to output the new state, after having computed it already at our last $\$$ token, we are 2 positions away from where we want to output the new state, and thus we use our output clauses at offset -1 to correctly output everything in the right position. Thus, for each \(q'\in Q\), we have the output clause
\[
\operatorname{OUTPUT}(q') \defeq \cttn{j \le i ; i = j + 1} (Q_{\$}(j)\wedge \mathrm{NEXT}_{q'}(j)).
\]
Since exactly one next state is prescribed by \(\delta\), exactly one such clause is active. 

\item \emph{Signpost slot:} Similar to the others, 

The next head marker is determined by the move predicates:
\begin{align*}
\operatorname{OUTPUT\_SIGNPOST}_{(2, -1)} &\defeq \cttn{j \le i ; i = j + 2}(Q_\$(j) \wedge \mathrm{MOVE}_{L}(j))\\
\operatorname{OUTPUT\_SIGNPOST}_{(2, 0)}  &\defeq \cttn{j \le i ; i = j + 2}(Q_\$(j) \wedge \mathrm{MOVE}_{S}(j))\\
\operatorname{OUTPUT\_SIGNPOST}_{(2, 1)} &\defeq \cttn{j \le i ; i = j + 2}(Q_\$(j) \wedge \mathrm{MOVE}_{R}(j))
\end{align*}
Here, the move direction dictates whether we pick the same signpost token located at an offset of distance 2 from our current position (Stay) or increment it (right) or decrement it (left). However, relative to the last position, the position at an offset of 2 is fixed, and the move direction whether left, right or staying would also be fixed depending on the transition functions output, we will be able to uniquely output the required next signpost token.  

\item \emph{Separator slot:}
After all tape slots have been generated, the next token is \(\$\). We always generate this every 4 tokens. 
\[
\operatorname{OUTPUT}(\$)\defeq \cttn{j \le i ; i = j + 3}Q_\$(j),
\]

\end{itemize}

Our first block can be generated directly from the input. The initial state is \(q_0\), and the head starts at position \(1\), so the first cell head is always \(c_1\). The predicates \(\mathrm{INIT}_{\tau}\) recover the correct initial content from \(\mathsf{Annotate}(w)\). Since no write-events have yet been logged, all counts \(N_{\sigma\to\tau}\) are zero. Hence the value change balance \(B_{\tau}\) recovers the correct initial tape contents, and our invariant holds.

Assume that we have been able to generate  the state and the current TM head cell signpost for the block \(B_m\) and have generated the separator token.  
Through our value change balance construction, we can recover exactly the true current symbol \(\xi_{m}(\eta_{m})\) under our head. 

If the transition at step \(m\) does not write to cell \(r\), then no new write-event for that cell is logged, so all balances remain unchanged. If the transition changes the symbol at cell \(r\) from \(\sigma\) to \(\tau\), then exactly one new event \(\mathrm{WRITE}(\sigma\to\tau)\) is logged for that cell. This decreases \(B_{\sigma}\) by \(1\), increases \(B_{\tau}\) by \(1\), and leaves all other balances unchanged. Hence the balance construction continues to recover the correct current tape symbol. Therefore the predicates \(\mathrm{NEXT}\), and \(\mathrm{MOVE}\) pick out exactly the unique transitions prescribed by \(\delta\), and we one after the other generate the correct event slot, the correct state slot and also the correct head slot.

Thus, the event slots output exactly the correct symbols \(\mathrm{KEEP}\) or \(\mathrm{WRITE}(\sigma\to\tau)\), thereafter the next state slot correctly outputs the correct new state \(q_{m+1}\). Finally we also output the correct new head positions \(c_{\eta_{m+1}}\), according to whether the transition leaves the current symbol unchanged or changes it from \(\sigma\) to \(\tau\).

\paragraph{Generating the output}
Once a halting state \(q_m\in H\) is reached, the CoT switches from the simulation phase to an output phase: it emits \(\langle SEP\rangle\), then scans the output tape from cell \(1\) to the first blank cell, emitting the annotated string \(\mathsf{Annotate}(F(w))\), and finally emits \(\langle EOS\rangle\) as soon as a blank cell is reached. Throughout, it again uses the value-change computation to read out the content of each cell.
If the machine never halts, this phase is never entered, and no \(\langle EOS\rangle\) is generated.

\paragraph{Number of emitted tokens}
For simulating each step of the TM transition, our construction outputs $4$ CoT tokens. hence if the Turing machine terminates in $N$ steps, then our {\infrasp} program generates $4N$ tokens in the simulation phase.
In the output phase, it generates at most another $2 + 2(N+|w|)$ tokens making up \(\langle SEP\rangle\mathsf{Annotate}(F(w))\langle EOS\rangle\); note here that the length of the nonblank prefix of the tape at the end cannot exceed $N+|w|$.
In the case of a multi-tape Turing machine, the length of each block increases by a factor of $N_{Tapes}$.

\end{proof}

\section{Additional Experimental Details}

\subsection{Task Descriptions}
\label{appendix:task-descriptions}

This section describes the evaluated algorithmic tasks on a high level. For each task we give a high-level example. For the respective trained CoT constructions see section \ref{appendix:cot-training-formats}.

\textbf{Parity}
In the Parity task, the model receives a string consisting of 0s and 1s and must evaluate whether the total number of 1s in the sequence is even or odd. After processing the full sequence, the model outputs either "E" or "O", representing an even or odd final result, respectively. For both the training and evaluation datasets, the individual bits within the sequences are randomly sampled.

\begin{promptbox}{Task Example: Parity}
\textbf{Problem:} Evaluate whether the number of 1s in a sequence of binary digits is even or odd.

\vspace{2mm}
\hrule
\vspace{2mm}

\textbf{Input:} \\
01101100

\vspace{2mm}
\hrule
\vspace{2mm}

\textbf{Expected Output:} \\
Even
\end{promptbox}

\textbf{Boolean Evaluation}
In the Boolean Evaluation task, the model is tasked with evaluating the truth value of a boolean formula. The input consists of a nested sequence of logical operations (i.e., AND, OR, NOT) applied to the primitives "True" and "False". The model must parse the hierarchical structure of the expression and compute the intermediate truth values to arrive at the correct final state. After evaluating the full formula, the model outputs either "True" or "False". For the training and evaluation sets, the Boolean formulas and their corresponding primitives are generated randomly to ensure diverse logical depths and sequence lengths.
\begin{promptbox}{Task Example: Boolean Evaluation}
\textbf{Problem:} Evaluate the final truth value of a nested logical expression.

\vspace{2mm}
\hrule
\vspace{2mm}

\textbf{Input Formula (AST Size = 4):} \\
( NOT true ) AND false

\textbf{Conceptual Resolution:}
\begin{enumerate}
    \item Evaluate inner clause: \texttt{NOT true} $\rightarrow$ \texttt{False}
    \item Evaluate root operator: \texttt{False AND false} $\rightarrow$ \texttt{False}
\end{enumerate}

\vspace{2mm}
\hrule
\vspace{2mm}

\textbf{Expected Output:} \\
False
\end{promptbox}
\textbf{S5 Permutation}
In this task, the model receives a natural language description of an initial state, which assigns five distinct objects to five variables (e.g., A = Apple, B = Banana). This initialization is followed by a sequence of state-changing operations consisting of pairwise "swap" instructions that exchange the contents of two variables. The model must continuously track the shifting positions of all five objects throughout the sequence and output their final arrangement. Task length is defined as the number of pairwise swap operations. Both the initial object assignments and the sequence of swaps are randomly sampled for both the training and out-of-domain evaluation sets.

\begin{promptbox}{Task Example: S5 Permutation}
\textbf{Problem:} Track the locations of 5 distinct items across a series of swap operations.

\vspace{2mm}
\hrule
\vspace{2mm}

\textbf{1. Initialization:} \\
A = Apple, B = Banana, C = Cat, D = Dog, E = Hat

\textbf{2. Operations (Sequence Length = 2):} \\
Step 1: Swap A and E \\
Step 2: Swap B and C

\vspace{2mm}
\hrule
\vspace{2mm}

\textbf{Expected Output (Final State):} \\
A = Hat, B = Cat, C = Banana, D = Dog, E = Apple
\end{promptbox}

\textbf{Binary Permutation}
\label{appendix:binary_permutation_description}
This is the same task as S5 Permutation, but we restrict the object pool to only two objects (Cat, Dog) that are assigned to the five variables. We ensure to generate no problem instance where all variables are initialized to the same object, so the task does not become trivial.

\begin{promptbox}{Task Example: Binary Permutation}
\textbf{Problem:} Track the locations of 2 distinct items across a series of swap operations.

\vspace{2mm}
\hrule
\vspace{2mm}

\textbf{1. Initialization:} \\
A = Cat, B = Dog, C = Cat, D = Dog, E = Dog

\textbf{2. Operations (Sequence Length = 2):} \\
Step 1: Swap A and E \\
Step 2: Swap B and C

\vspace{2mm}
\hrule
\vspace{2mm}

\textbf{Expected Output (Final State):} \\

A = Dog, B = Cat, C = Dog, D = Dog, E = Cat
\end{promptbox}

\subsection{CoT Training Formats \& Further Training Details}\label{appendix:cot-training-formats}
\subsubsection{Parity}
\textbf{Data Generation:} For both training and evaluation, binary sequences of lengths bounded by the respective experimental setups are randomly sampled.

\textbf{CoT Formats:} We compare a standard dense trace against our proposed value-change trace. The \textbf{Naive CoT} outputs the running parity state ('E' or 'O') after processing every single bit in the input sequence. In contrast, the Value-Change CoT starts with an initial 'E' token and only outputs a new state when the parity actually flips (i.e., upon encountering a '1'), skipping '0's entirely. This format trains the model to only log explicit state transitions.

\begin{promptbox}{CoT Training Formats: Parity}

\textbf{Prompt Format:} \\
\texttt{0 0 1 0 1 0 0 <|trace|>}

\vspace{2mm}
\hrule
\vspace{2mm}

\textbf{Naive CoT Trace:} \\
\texttt{E E O O E E E answer E<|endoftext|>}

\vspace{2mm}
\hrule
\vspace{2mm}

\textbf{Value-Change CoT Trace:} \\
\texttt{E O E answer E<|endoftext|>}
\end{promptbox}

\subsubsection{Boolean Evaluation}

\textbf{Data Generation:} For both training and evaluation, we generate random Boolean formulas represented as Abstract Syntax Trees (ASTs) containing the literals \texttt{true} and \texttt{false}, and the operators \texttt{NOT}, \texttt{AND}, and \texttt{OR}. Sequence length is defined by the total number of nodes in the AST. To construct the dataset, we first sample a target sequence length uniformly at random from the length bounds specified by the given experiment. To guarantee that our dataset is structurally unbiased, we uniformly sample from the space of all valid Boolean formulas of that exact size. We achieve this by precomputing the exact combinatorial counts of valid formulas for each size under our grammar, and then recursively sampling operations weighted by these exact counts. This ensures a uniform distribution over all possible AST structures, avoiding the depth and branching biases inherent to naive top-down generation.

\textbf{CoT Formats:} The Naive CoT requires the model to output the truth value ('T' or 'F') of each subformula in standard post-order without any explicit structural pointers. The signpost CoT annotates the input formula with unique signpost tokens for every operation and literal. Each literal is preceded by an index hint.
In addition, an operator is preceded by square brackets which point to the child operand it depends on 
(e.g., \texttt{[<0>, <1>] <2> AND} $\rightarrow$ evaluate the truth values associated with signpost tokens 0 and 1 with the operator AND). The formula can thus be solved by following the signpost tokens in their natural order.
In the trace, the model must prefix every evaluated truth value with its corresponding index, providing explicit retrieval targets for deeply nested clauses. The signpost tokens (numbers in angle brackets) are implemented as atomic tokens in the tokenizer.

\vspace{2mm}
\begin{promptbox}{CoT Training Formats: Boolean Evaluation}
\textbf{Naive CoT Prompt:} \\
\texttt{( false OR NOT ( false AND true ) ) <|trace|>}

\vspace{2mm}
\hrule
\vspace{2mm}

\textbf{Naive CoT Trace:} \\
\texttt{F F T F T T answer: T<|endoftext|>}

\vspace{2mm}
\hrule
\vspace{2mm}

\textbf{Signpost CoT Prompt:} \\
\texttt{( <0> false [ <0> , <4> ] <5> OR [ <3> ] <4> NOT ( <1> false [ <1> , <2> ] <3> AND <2> true ) ) <|trace|>}

\vspace{2mm}
\hrule
\vspace{2mm}

\textbf{Signpost CoT Trace:} \\
\texttt{<0> F <1> F <2> T <3> F <4> T <5> T answer: T<|endoftext|>}
\end{promptbox}

\subsubsection{S5 Permutation}
\label{subsec:datagen_s5}
\textbf{Data Generation:} For both training and evaluation, initial states are created by randomly assigning five distinct objects to variables A through E. The sequence length is defined by the total number of subsequent pairwise swap operations. Swap operations are randomly sampled for each problem instance. As an additional difficulty, we introduce a "repetitive" variant of the task. In this version, there is a 90\% probability that a step in the prompt simply repeats the same swap operation from the previous step. This creates extended sequences of identical operations followed by abrupt changes to new swaps.

\textbf{CoT Formats.}

\textit{Naive CoT:} Operations are presented in chronological order, with the trace logging the full updated state after each step.

\textit{Signpost CoT:} We sequentially assign signposts to swap operations (using an offset during training) and randomly sample without replacement from the remaining signpost pool for the target variables (A - E). During training, we randomly shuffle the operations in the prompt so the model must rely on tracking signposts rather than positional order. At test time, operations are presented in chronological order for direct comparison with naive CoT.

\textit{Value-change CoT:} Identical to Signpost CoT, except the trace omits the full state. Instead, it logs only modified variables per step (e.g., \texttt{W\_A Dog\_Apple}, where \texttt{A} is the modified variable, \texttt{Dog} is the outgoing object and \texttt{Apple} is the incoming object).
To recover the final assignment of objects, the trace concludes with a resolution step where the model compares the number of incoming and outgoing swaps per variable (treating initialization as an incoming swap). A net positive count identifies the final object in that position.  We implement the object transitions as atomic tokens, as we find that this helps the model to learn this counting logic.

\textbf{Training Setup:}
As the S5 Permutation task includes tokens that disproportionately contribute to task success compared to simple marker tokens (e.g., \texttt{swap} or \texttt{.}), standard SGD can struggle to reliably isolate and learn these sparse, critical operations without falling into local minima. Specifically, we apply a 7$\times$ weight to the cross-entropy loss for comparison tokens (e.g., \texttt{==}, \texttt{<}, \texttt{>}) and signpost tokens, and a 3$\times$ weight to variable identifiers (A through E). Additionally, for the Signpost and Value-change CoT formats, we mix in training samples where the swap operations are not shuffled, to improve generalization to unshuffled sequences at test time.

\begin{promptbox}{Training Formats: S5 Permutation}
\textbf{Naive CoT Prompt:} \\
\texttt{init A Book B Cat C Hat D Dog E Apple operation swap E A . swap A B . swap E C . end . <|trace|>}

\vspace{2mm}
\hrule
\vspace{2mm}

\textbf{Naive CoT Trace:} \\
\texttt{swap E A write A Apple B Cat C Hat D Dog E Book swap A B write A Cat B Apple C Hat D Dog E Book swap E C write A Cat B Apple C Book D Dog E Hat end answer Cat Apple Book Dog Hat<|endoftext|>}

\vspace{2mm}
\hrule
\vspace{2mm}

\textbf{Signpost CoT Prompt:} \\
\texttt{init A Book B Cat C Hat D Dog E Apple start <1> stop <3> operation <1> swap [ <11> E <61> A ] <1> . <2> swap [ <33> A <149> B ] <2> . <3> swap [ <6> E <29> C ] <3> . <|trace|>}

\vspace{2mm}
\hrule
\vspace{2mm}

\textbf{Signpost CoT Trace:} \\
\texttt{load <1> . <1> swap <11> E <61> A write A Apple B Cat C Hat D Dog E Book load <2> . <2> swap <33> A <149> B write A Cat B Apple C Hat D Dog E Book load <3> . <3> swap <6> E <29> C write A Cat B Apple C Book D Dog E Hat load end . end answer Cat Apple Book Dog Hat<|endoftext|>}

\vspace{2mm}
\hrule
\vspace{2mm}

\textbf{Signpost + Value-Change CoT Prompt:} \\
\texttt{init A Book B Cat C Hat D Dog E Apple start <1> stop <3> operation <1> swap [ <154> E <28> A ] <1> . <2> swap [ <73> A <84> B ] <2> . <3> swap [ <31> E <164> C ] <3> . <|trace|>}

\vspace{2mm}
\hrule
\vspace{2mm}

\textbf{Signpost + Value-Change CoT Trace:} \\
\texttt{load <1> . <1> swap <154> E <28> A W\_E Apple\_Book W\_A Book\_Apple load <2> . <2> swap <73> A <84> B W\_A Apple\_Cat W\_B Cat\_Apple load <3> . <3> swap <31> E <164> C W\_E Book\_Hat W\_C Hat\_Book load end res <A> init Book Book IN == OUT Cat IN > OUT Hat IN == OUT Dog IN == OUT Apple IN < OUT final Cat <B> init Cat Book IN == OUT Cat IN < OUT Hat IN == OUT Dog IN == OUT Apple IN > OUT final Apple <C> init Hat Book IN > OUT Cat IN == OUT Hat IN < OUT Dog IN == OUT Apple IN == OUT final Book <D> init Dog Book IN == OUT Cat IN == OUT Hat IN == OUT Dog IN == OUT Apple IN == OUT final Dog <E> init Apple Book IN < OUT Cat IN == OUT Hat IN > OUT Dog IN == OUT Apple IN < OUT final Hat answer Cat Apple Book Dog Hat<|endoftext|>}
\end{promptbox}
\subsubsection{Binary Permutation}
\label{appendix:binary_permutation_formats}
\textbf{Data Generation:}
Data is generated the same way as for S5 Permutation. However, we restrict the object set to two objects that can be assigned to multiple variables. We ensure to generate problem instances where each object occurs at least once, to avoid trivial instances.

\textbf{CoT Formats:}
The naive and signpost CoT are the same as for S5 Permutation. The value-change CoT format an additional keep marker (K\_) to denote that the state for a given variable was not changed (i.e. when the incoming and outgoing object is the same).
\textbf{Training Setup:}
As for the S5 Permutation task, we apply a 7$\times$ weight to the cross-entropy loss for comparison tokens (e.g., \texttt{==}, \texttt{<}, \texttt{>}) and signpost tokens, and a 3$\times$ weight to variable identifiers (A through E). Again, for the Signpost and Value-change CoT formats, we mix in training samples where the swap operations are not shuffled.

\begin{promptbox}{Training Formats: Binary Permutation}
\textbf{Naive CoT Prompt:} \\
\texttt{init A Cat B Dog C Cat D Cat E Dog operation swap E A . swap E D . swap E C . end . <|trace|>}

\vspace{2mm}
\hrule
\vspace{2mm}

\textbf{Naive CoT Trace:} \\
\texttt{swap E A write A Dog B Dog C Cat D Cat E Cat swap E D write A Dog B Dog C Cat D Cat E Cat swap E C write A Dog B Dog C Cat D Cat E Cat end answer Dog Dog Cat Cat Cat<|endoftext|>}

\vspace{2mm}
\hrule
\vspace{2mm}

\textbf{Signpost CoT Prompt:} \\
\texttt{init A Cat B Dog C Cat D Cat E Dog start <1> stop <3> operation <1> swap [ <100> E <20> A ] <1> . <2> swap [ <16> E <152> D ] <2> . <3> swap [ <19> E <51> C ] <3> . <|trace|>}

\vspace{2mm}
\hrule
\vspace{2mm}

\textbf{Signpost CoT Trace:} \\
\texttt{load <1> . <1> swap <100> E <20> A write A Dog B Dog C Cat D Cat E Cat load <2> . <2> swap <16> E <152> D write A Dog B Dog C Cat D Cat E Cat load <3> . <3> swap <19> E <51> C write A Dog B Dog C Cat D Cat E Cat load end . end answer Dog Dog Cat Cat Cat<|endoftext|>}

\vspace{2mm}
\hrule
\vspace{2mm}

\textbf{Signpost + Value-Change CoT Prompt:} \\
\texttt{init A Cat B Dog C Cat D Cat E Dog start <1> stop <3> operation <1> swap [ <100> E <20> A ] <1> . <2> swap [ <16> E <152> D ] <2> . <3> swap [ <19> E <51> C ] <3> . <|trace|>}

\vspace{2mm}
\hrule
\vspace{2mm}

\textbf{Signpost + Value-Change CoT Trace:} \\
\texttt{load <1> . <1> swap <100> E <20> A W\_E Dog\_Cat W\_A Cat\_Dog load <2> . <2> swap <16> E <152> D K\_E K\_D load <3> . <3> swap <19> E <51> C K\_E K\_C load end res <A> init Cat IN > OUT final Dog <B> init Dog IN == OUT final Dog <C> init Cat IN == OUT final Cat <D> init Cat IN == OUT final Cat <E> init Dog IN < OUT final Cat answer Dog Dog Cat Cat Cat<|endoftext|>}
\end{promptbox}

\begin{figure}[h!]
    \centering
    \includegraphics[width=1.0\linewidth]{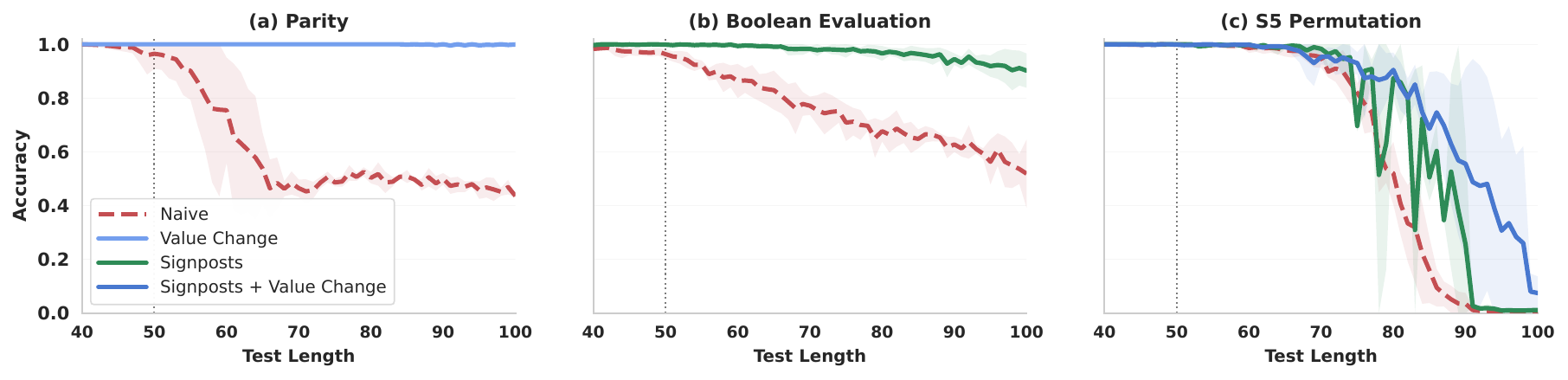}
    \caption{Evaluation curves for training length 50 for all our tasks. We observe similar trends as seen in Figure~\ref{fig:50_results_main} for Parity and Boolean Evaluation. However for S5, we hypothesize that our models are undertrained and would require more training for seeing generalization. We do see generalization in one seed for S5 (with value change and signposts), but overall we found S5 hard to train on lengths 50. We document and comment on this in Section~\ref{appendix:s5_discussion}.}
    \label{fig:len_50_curve}
\end{figure}

\subsection{Binary Permutation Experiments}
\label{appendix:binary_permutation_experiments}
To highlight the potential of learning value-change formats, we evaluate \textsc{Binary Permutation} as an ablation of the \textsc{S5 Permutation} task. This task restricts the object set to just two objects which can be assigned to more than one variable (task description in Appendix \ref{appendix:binary_permutation_description}). As a consequence, this task produces more repetitions of objects in the state updates, as well as swap operations that do not alter the state (i.e. when the incoming and outgoing object is the same). In Figure \ref{fig:permutation_binary_len30} we report the results for training length 30 on this task. The results confirm that combining signpost tokens with value-change encoding yields the strongest length generalization, outperforming both the naive CoT and signpost-only formats, across both the standard and repetitive variant (described in Section~\ref{subsec:datagen_s5}) of the task.
\begin{figure}[h]
    \centering
    \includegraphics[width=1.0\linewidth]{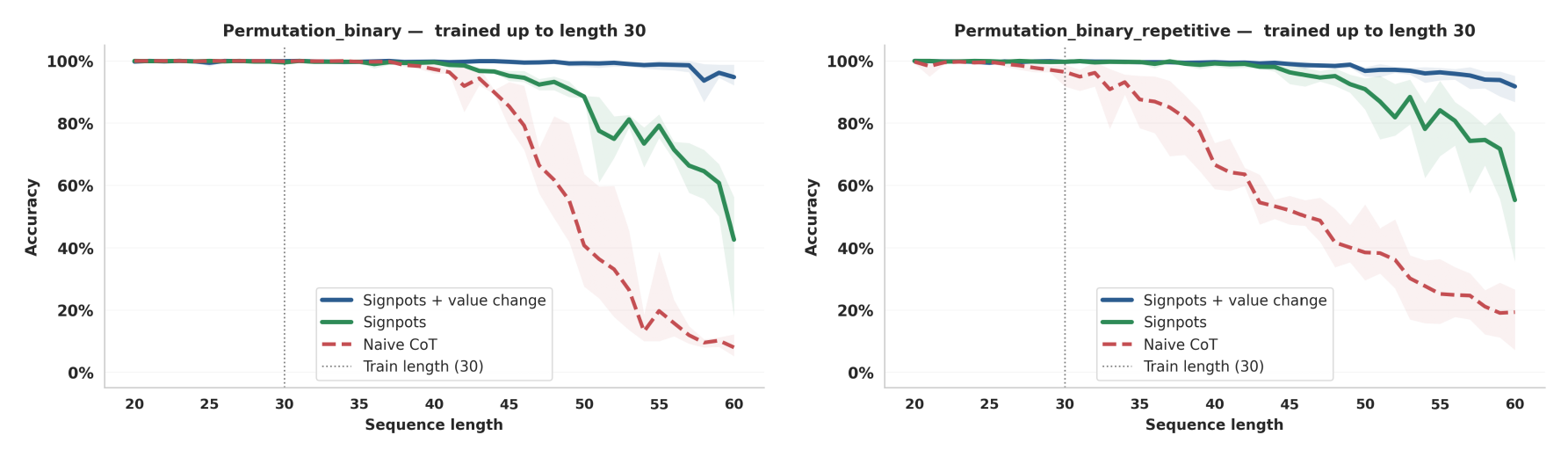}
    \caption{Evaluation curves for training length 30 on the Binary Permutation task. Compared to S5 Permutation, this task introduces the added difficulty of non-unique object retrieval when updating states, caused by frequent object repetitions in the trace. Value-Change CoT successfully mitigates this non-unique retrieval problem.}
    \label{fig:permutation_binary_len30}
\end{figure}
\begin{figure}[h]
    \centering
    \includegraphics[width=1.0\linewidth]{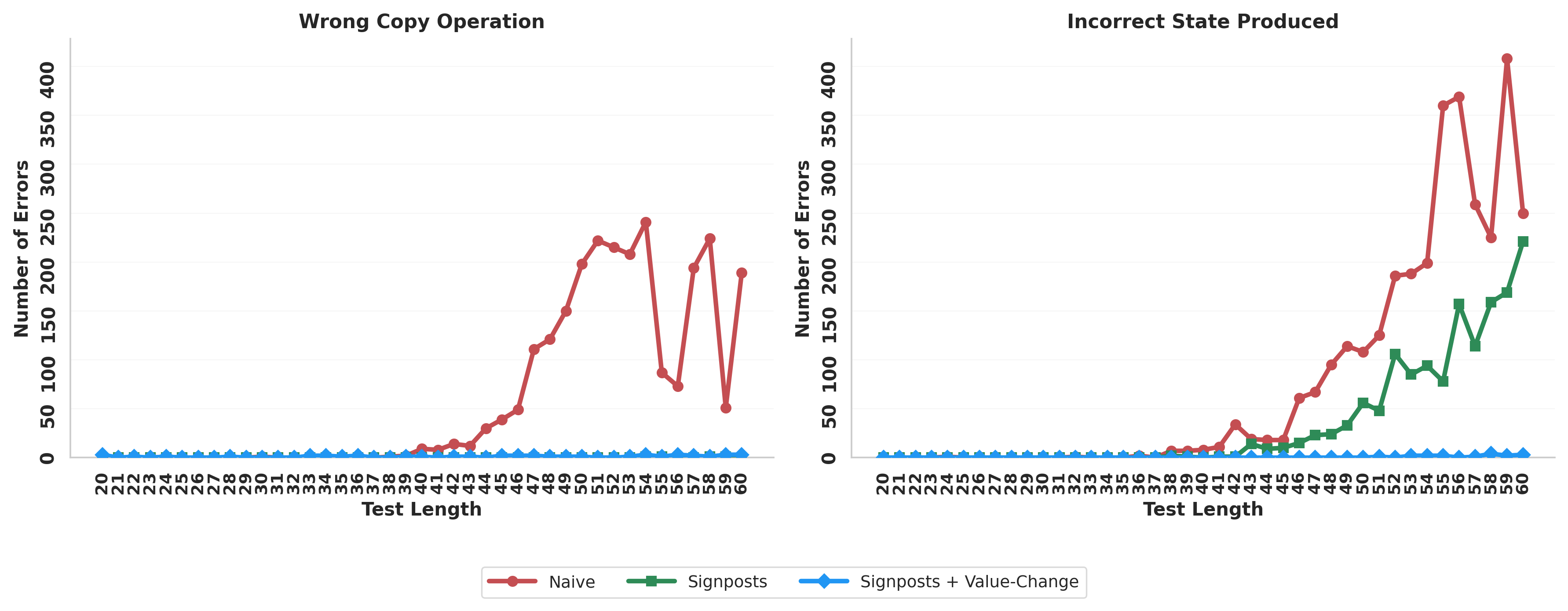}
    \caption{Frequency of first-occurring errors across test lengths for our three evaluated CoT formats on Binary Permutation. Wrong copy operations (incorrect signpost retrieval or variable pairs) and incorrect state updates are the most prevalent error types. At longer test lengths, incorrect state updates become the dominant failure mode for Naive and Signpost CoT, whereas the Value-Change format successfully mitigates both error types.}
    \label{fig:permutation_binary_error_analysis}
\end{figure}

We further quantitatively confirm that the degradation of performance of the signpost-only format is driven by highly repetitive state updates through an error analysis (see Figure \ref{fig:permutation_binary_error_analysis}). To this end, we check the best performing seed for each CoT scheme for the first occurring error in the trace across all test lengths. We find that the majority of errors are caused by incorrect copy operations (i.e. copying the wrong signpost token or the wrong variables in a swap operation) and incorrect state updates (i.e. the model outputs the correct swap operation, but fails to correctly update the state from the previous state). For the value-change CoT format, an incorrect state update is registered when the model outputs a keep-token instead of a write-token (and vice-versa). Generally, we observe that signpost tokens substantially reduce wrong copy errors relative to the naive CoT, consistent with their theoretical role in addressing the copying barrier. For longer test lengths, incorrect state updates become the most dominant error case for both the signpost and naive CoT format. In contrast, the number of such errors remain low for the value-change format, demonstrating that such an encoding successfully helps to overcome the non-unique retrieval barrier.

\subsection{Experience with Training S5}
\label{appendix:s5_discussion}

Empirically, we found \textsc{S5 Permutation} to be harder to train than \textsc{Parity} and \textsc{Boolean Evaluation}: it was more seed-sensitive, and required more optimization effort, and as Figure \ref{fig:len_50_curve} shows, exhibits weaker length generalization than others on length 50. 
In particular, we found that models struggle at retrieving the correct signpost or object identifiers (A - E) on longer test sequences.
A plausible explanation for this is that \textsc{S5 Permutation} requires significantly more tokens per reasoning step, leading to noisier optimization. Furthermore, this leads to extensive training times and our models may be undertrained for this length. As shown in Figure \ref{fig:ood_id_overview}, we do observe improved accuracy in checkpoints as training progresses.
We therefore believe that more training data should help in achieving more stable and even better generalization.

\subsection{Prompting Experiments}\label{appendix:prompting-experiments}

For the prompting experiments we aim to cover a range of LLM architectures and sizes. To this end, we evaluate the three models Qwen2.5-7B-Instruct-Turbo, Mistral-Small-24B-Instruct-2501 and Llama-3.3-70B-Instruct-Turbo via the TogetherAI API \footnote{https://www.together.ai/}. 

\textbf{Signpost Experiment:} We slightly adapt the task formats for our algorithmic tasks to make them more natural looking. While a more minimalist task format is advantageous for the from-scratch experiments (as it reduces the total number of processed tokens), pretrained LLMs benefit from formats that align more closely with the natural language distribution of their pretraining data.
For instance, we remove unnecessary whitespaces (which we needed for the whitespace tokenizer in the from-scratch experiments), replace the signpost tokens from the from-scratch experiments with natural numbers, and add "=" signs for the S5 Permutation task, to make models better understand the assignment of objects to variables.

Consistent with our from-scratch evaluations, all models are evaluated using greedy decoding to ensure reproducible results. For both the baseline and signpost CoT formats, we evaluate the models in a 5-shot setting, providing five in-context examples that mirror the respective CoT traces used in our from-scratch training setups.

\begin{promptbox}{Prompt Format: S5 Permutation (Naive vs. \textcolor{blue!80!black}{Signpost Tokens})}
You are a state-tracking agent evaluating the contents of boxes labeled A through E. You are given an initial state ('init') assigning an object to each box, followed by a \textcolor{blue!80!black}{numbered} sequence of operations to swap the contents of two boxes. You must track the state step-by-step \textcolor{blue!80!black}{using the provided operation indices}. 

For each operation, you must repeat the \textcolor{blue!80!black}{numbered} swap command, and then output 'write' followed by the updated state of all boxes. \textcolor{blue!80!black}{Use the index hints to find the correct swap operation in the trace.} 

When the operation sequence is fully resolved, output the final state from the last operation. Finish your answer with 'finished'. Do not output any natural language. 

Below are examples of how to correctly execute the operations.
\end{promptbox}

\begin{promptbox}{Prompt Format: Boolean Evaluation (Naive CoT)}
Your task is to compute the truth value of a boolean formula. You will be given a boolean formula containing the operators NOT, AND, OR, and the literals true and false. Evaluate the formula step by step in post-order. Evaluate the leaves first and the root last. For each step, write down the currently evaluated subformula and then calculate its truth value.

Your final answer must be exactly "answer T" or "answer F" with no other formatting. Give your final answer in this format.

When the formula is fully evaluated, output the final state from the last operation. Finish your answer with 'finished'.

Below are examples of how to correctly execute the operations.
\end{promptbox}

\begin{promptbox}{Prompt Format: Boolean Evaluation (Signpost Tokens)}
Your task is to compute the truth value of a boolean formula. You will be given a boolean formula containing the operators NOT, AND, OR, and the literals true and false.

The formula has been annotated with index hints that indicate the order in which the formula must be evaluated.

\textbf{Syntax Rules:}
\begin{itemize}
    \item Literals: ``$\langle$index$\rangle$ $\langle$boolean$\rangle$'' (e.g., ``0 true'')
    \item Unary Operations: ``[ $\langle$operand\_index$\rangle$ ] $\langle$index$\rangle$ NOT'' (e.g., ``[ 0 ] 1 NOT'')
    \item Binary Operations: ``[ $\langle$left\_index$\rangle$ , $\langle$right\_index$\rangle$ ] $\langle$index$\rangle$ $\langle$AND/OR$\rangle$'' (e.g., ``[ 0 , 1 ] 2 AND'')
\end{itemize}

For each step, output the index number followed by 'T' if the respective subformula evaluates to true, or 'F' if it evaluates to false. 

When the formula is fully evaluated, output the final state from the last operation. Finish your answer with 'finished'. Do not output any natural language.

Below are examples of how to correctly execute the operations.
\end{promptbox}

\textbf{Value-Change:} 
To evaluate the effectiveness of logging value changes, we designed a zero-shot Variable Assignment Evaluation task. Crucially, unlike the previous signpost experiments where models were asked to generate step-by-step intermediate reasoning (CoT), here we explicitly force the models to act as internal state trackers and output \emph{only} the final answer directly. 

We evaluate the models on two distinct input formats to see if augmenting the input context alone improves internal state evaluation: a standard code format with line numbers, and a \emph{value-change} format where the input program is augmented with line numbers and inline comments explicitly tracing the state transitions of the variables (e.g., \texttt{\# node\_b: 1 -> 0}).

\begin{promptbox}{Prompt Format: Variable Assignment (Standard)}
\textbf{System Prompt:} You are a very careful and precise assistant. You always follow the instructions and solve tasks yourself. You never generate code. You also give the answer directly whenever possible without trying to generate any intermediate Chain of Thought.

\textbf{User:} \\
Program: \\
1. node\_a = 9 \\
2. node\_b = 4 \\
3. node\_c = 2 \\
4. node\_m = 3 \\
5. node\_q = 2 \\
6. node\_a = 0 \\
7. node\_p = 0 \\
8. node\_d = 0 \\
9. node\_e = 5 \\
10. node\_c = 8 \\
11. node\_e = 4 \\
12. node\_m = 7 \\
13. node\_q = 1 \\
Query:
print(node\_a, node\_m, node\_b) \\ 
Return only the final answer in $\langle$ output $\rangle$ $\langle$output $\rangle$ tags.
\end{promptbox}

\begin{promptbox}{Prompt Format: Variable Assignment with Value Change }
\textbf{System Prompt:} You are a very careful and precise assistant. You always follow the instructions and solve tasks yourself. You never generate code. You also give the answer directly whenever possible without trying to generate any intermediate Chain of Thought.

\textbf{User:} \\
Program: \\
1. node\_a = 9 \\
2. node\_b = 4 \\
3. node\_c = 2 \\
4. node\_m = 3 \\
5. node\_q = 2 \\
6. node\_a = 0 \quad \# node\_a: 9 $\to$ 0  \\
7. node\_p = 0 \\
8. node\_d = 0 \\
9. node\_e = 5 \\
10. node\_c = 8 \quad \# node\_c: 2 $\to$ 8 \\
11. node\_e = 4 \quad \# node\_e: 5 $\to$ 4 \\
12. node\_m = 7 \quad \# node\_m: 3 $\to$ 7 \\
13. node\_q = 1 \quad \# node\_q: 2 $\to$ 1 \\
Query:
print(node\_a, node\_m, node\_b) \\ 
Return only the final answer in $\langle$ output $\rangle$ $\langle$output $\rangle$ tags.
\end{promptbox}

\subsection{Hyperparameter Settings \& Model Selection}\label{appendix:hyperparams}
This section reports the hyperparameter configurations, the top three seeds based on the validation set, as well as graphs for the ID validation and OOD validation performance throughout training.

\begin{table}[t]
\centering
\small
\setlength{\tabcolsep}{3pt}

\begin{tabular}{llccccc}
\toprule
\textbf{Task (Train Length)}
& \textbf{CoT Format}
& \textbf{Rank}
& \textbf{Seed}
& \begin{tabular}[c]{@{}c@{}}
    \textbf{Learning}\\
    \textbf{Rate}
  \end{tabular}
& \begin{tabular}[c]{@{}c@{}}
    \textbf{Weight}\\
    \textbf{Decay}
  \end{tabular}
& \begin{tabular}[c]{@{}c@{}}
    \textbf{Repetitive}\\
    \textbf{Ratio}
  \end{tabular}
\\
\midrule

\multirow{9}{*}{\textbf{S5 Permutation (Length 30)}}
& \multirow{3}{*}{Naive}
& 1 & [46] & [5e-4] & [0.1]  & [0.05] \\
& & 2 & [47] & [3e-4] & [0.01] & [0.05] \\
& & 3 & [50] & [5e-4] & [0.1]  & [0.05] \\
\cmidrule{2-7}

& \multirow{3}{*}{Signpost}
& 1 & [50] & [3e-4] & [0.01] & [0.05] \\
& & 2 & [44] & [5e-4] & [0.1]  & [0.05] \\
& & 3 & [46] & [5e-4] & [0.1]  & [0.05] \\
\cmidrule{2-7}

& \multirow{3}{*}{Signpost + Value-Change}
& 1 & [45] & [5e-4] & [0.1] & [0.05] \\
& & 2 & [49] & [5e-4] & [0.1] & [0.05] \\
& & 3 & [47] & [5e-4] & [0.1] & [0.15] \\
\midrule

\multirow{9}{*}{\textbf{S5 Permutation (Length 50)}}
& \multirow{3}{*}{Naive}
& 1 & [48] & [5e-4] & [0.01] & [0.05] \\
& & 2 & [45] & [5e-4] & [0.01] & [0.05] \\
& & 3 & [44] & [3e-4] & [0.01] & [0.05] \\
\cmidrule{2-7}

& \multirow{3}{*}{Signpost}
& 1 & [45] & [3e-4] & [0.01] & [0.05] \\
& & 2 & [44] & [3e-4] & [0.01] & [0.05] \\
& & 3 & [48] & [3e-4] & [0.01] & [0.05] \\
\cmidrule{2-7}

& \multirow{3}{*}{Signpost + Value-Change}
& 1 & [44] & [5e-4] & [0.01] & [0.05] \\
& & 2 & [46] & [5e-4] & [0.01] & [0.05] \\
& & 3 & [49] & [5e-4] & [0.01] & [0.05] \\
\midrule

\multirow{9}{*}{\textbf{Binary Permutation (Length 30)}}
& \multirow{3}{*}{Naive}
& 1 & [50] & [5e-4] & [0.1] & [0.05] \\
& & 2 & [48] & [5e-4] & [0.1] & [0.15] \\
& & 3 & [49] & [5e-4] & [0.1] & [0.05] \\
\cmidrule{2-7}

& \multirow{3}{*}{Signpost}
& 1 & [48] & [5e-4] & [0.1] & [0.15] \\
& & 2 & [50] & [5e-4] & [0.1] & [0.15] \\
& & 3 & [46] & [5e-4] & [0.1] & [0.15] \\
\cmidrule{2-7}

& \multirow{3}{*}{Signpost + Value-Change}
& 1 & [45] & [5e-4] & [0.1] & [0.15] \\
& & 2 & [49] & [5e-4] & [0.1] & [0.05] \\
& & 3 & [46] & [5e-4] & [0.1] & [0.05] \\

\bottomrule
\end{tabular}

\caption{
Top three performing hyperparameter configurations and their corresponding
random seeds for the permutation tasks. Configurations are ranked according
to the performance of their best validation-set checkpoint. }
\label{tab:top_configs_permutation}
\end{table}

\begin{table}[t]
\centering
\small
\setlength{\tabcolsep}{4pt}

\begin{tabular}{llcccc}
\toprule
\textbf{Task (Train Length)}
& \textbf{CoT Format}
& \textbf{Rank}
& \textbf{Seed}
& \textbf{Learning Rate}
& \textbf{Weight Decay} \\
\midrule

\multirow{6}{*}{\textbf{Parity (Length 30)}}
& \multirow{3}{*}{Naive}
& 1 & [48] & [5e-4] & [0.1] \\
& & 2 & [50] & [5e-4] & [0.1] \\
& & 3 & [49] & [5e-4] & [0.1] \\
\cmidrule{2-6}
& \multirow{3}{*}{Value-Change}
& 1 & [44] & [3e-4] & [0.1] \\
& & 2 & [45] & [1e-4] & [0.01] \\
& & 3 & [46] & [1e-4] & [0.01] \\
\midrule

\multirow{6}{*}{\textbf{Parity (Length 50)}}
& \multirow{3}{*}{Naive}
& 1 & [47] & [5e-4] & [0.1] \\
& & 2 & [45] & [3e-4] & [0.1] \\
& & 3 & [44] & [3e-4] & [0.1] \\
\cmidrule{2-6}
& \multirow{3}{*}{Value-Change}
& 1 & [44] & [3e-4] & [0.1] \\
& & 2 & [45] & [1e-4] & [0.01] \\
& & 3 & [46] & [3e-4] & [0.1] \\
\midrule

\multirow{6}{*}{\textbf{Boolean (Length 30)}}
& \multirow{3}{*}{Naive}
& 1 & [47] & [3e-4] & [0.1] \\
& & 2 & [46] & [5e-4] & [0.1] \\
& & 3 & [45] & [3e-4] & [0.1] \\
\cmidrule{2-6}
& \multirow{3}{*}{Signpost}
& 1 & [46] & [5e-4] & [0.1] \\
& & 2 & [49] & [3e-4] & [0.1] \\
& & 3 & [50] & [5e-4] & [0.1] \\
\midrule

\multirow{6}{*}{\textbf{Boolean (Length 50)}}
& \multirow{3}{*}{Naive}
& 1 & [46] & [3e-4] & [0.1] \\
& & 2 & [48] & [3e-4] & [0.1] \\
& & 3 & [49] & [5e-4] & [0.1] \\
\cmidrule{2-6}
& \multirow{3}{*}{Signpost}
& 1 & [49] & [5e-4] & [0.1] \\
& & 2 & [47] & [5e-4] & [0.1] \\
& & 3 & [44] & [5e-4] & [0.1] \\

\bottomrule
\end{tabular}

\caption{
Top three performing hyperparameter configurations and their corresponding
random seeds for the Parity and Boolean Evaluation tasks. Configurations
are ranked according to the performance of their best validation-set
checkpoint.
}
\label{tab:top_configs_parity_boolean}
\end{table}

\begin{figure}[ht]
    \centering
    \includegraphics[width=0.8\linewidth]{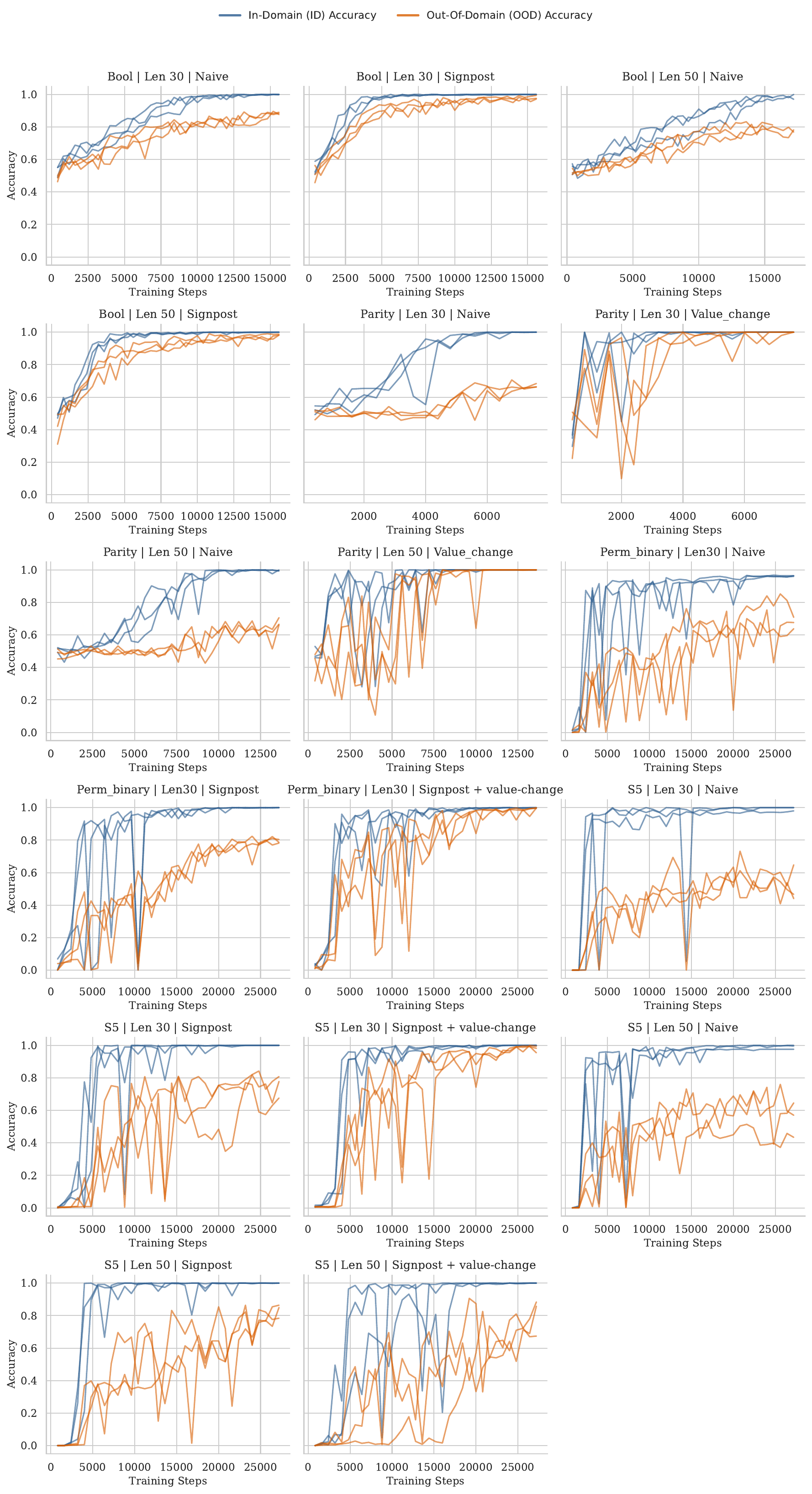}
    \caption{Validation dynamics for the top 3 performing seeds (based on validation accuracy) across all evaluated tasks, training lengths, and CoT formats. We plot both ID validation accuracy (blue) and OOD validation accuracy (orange) over the course of training. }
    \label{fig:ood_id_overview}
\end{figure}

\begin{table*}[h]
\centering
\small
\begin{tabular}{
    @{}
    >{\raggedright\arraybackslash}p{0.35\textwidth}
    >{\raggedright\arraybackslash}p{0.55\textwidth}
    @{}
}
\toprule
\multicolumn{2}{c}{\textbf{Common Parameters}} \\
\midrule
Model Architecture & GPT-2 (6 layers, 8 attention heads, 512 embedding dimension) \\
Number of Positional Encodings & 2048 (4096 for S5 length 50) \\
Parameter Count & $\sim$25M \\
Positional Encodings & Absolute Positional Encodings (APE) \\
Tokenizer & Task-specific whitespace tokenizer \\
Decoding Method & Greedy \\
Learning Rate Scheduling & Cosine decay with 0.1 min LR \\
Learning Rate Warmup Ratio & 0.05 \\
Learning Rate Warmup Scheduling & Linear \\
Dropout & 0.0 \\
Optimizer & AdamW \\
Batch Size & 256 \\
Seeds & 44, 45, 46, 47, 48, 49, 50 \\
\midrule
\multicolumn{2}{c}{\textbf{Task-Specific Configurations}} \\
\midrule

\textbf{Parity} & \\
LR \& Weight Decay Pairs &
[(1e-4, 0.01), (3e-4, 0.1), (5e-4, 0.1)] \\
Training Samples &
[2M (for train length 30), 3.5M (for train length 50)] \\

\midrule
\textbf{Boolean Evaluation} & \\
LR \& Weight Decay Pairs &
[(1e-4, 0.01), (3e-4, 0.1), (5e-4, 0.1)] \\
Training Samples & [4.5M] \\
Max Trained Signposts &
[65 (for train length 30), 105 (for train length 50)] \\

\midrule
\textbf{S5 Permutation (Len 30)} & \\
Evaluated Configurations (LR, WD, Rep. Ratio) &
[(3e-4, 0.01, 0.05), (5e-4, 0.1, 0.05),
(5e-4, 0.1, 0.15)] \\
Training Samples & [7M] \\
Max Trained Signposts & [185] \\
Ordered Data Percentage & [0.05] \\

\midrule
\textbf{S5 Permutation (Len 50)} & \\
Evaluated Configurations (LR, WD, Rep. Ratio) &
[(3e-4, 0.01, 0.05), (5e-4, 0.01, 0.05),
(5e-4, 0.1, 0.05)] \\
Training Samples & [7M] \\
Max Trained Signposts & [350] \\
Ordered Data Percentage & [0.10] \\

\midrule
\textbf{Binary Permutation (Len 30)} & \\
Evaluated Configurations (LR, WD, Rep. Ratio) &
[(3e-4, 0.01, 0.05), (5e-4, 0.1, 0.05),
(5e-4, 0.1, 0.15)] \\
Training Samples & [7M] \\
Max Trained Signposts & [185] \\
Ordered Data Percentage & [0.05] \\

\bottomrule
\end{tabular}
\caption{Experimental setup and hyperparameter search space across the four evaluated tasks. Common architectural parameters are shared across all models. For each task-specific configuration, hyperparameter ranges evaluated during the search are denoted in square brackets.}
\label{tab:hyperparams}
\end{table*}

\end{document}

%% file: math_commands.tex
\usepackage{amsmath,amsfonts,bm}

\def\eqref#1{equation~\ref{#1}}
\def\1{\bm{1}}

\def\vc{{\bm{c}}}

\DeclareMathAlphabet{\mathsfit}{\encodingdefault}{\sfdefault}{m}{sl}
\SetMathAlphabet{\mathsfit}{bold}{\encodingdefault}{\sfdefault}{bx}{n}

\newcommand{\Omit}[1]{}

\newcommand{\defeq}{\leftarrow}